\newcommand{\RR}{\mathbb{R}}
\newcommand{\NN}{\mathbb{N}}
\newcommand{\Pp}{\mathcal{P}}
\newcommand{\dotp}[2]{\langle #1,\,#2\rangle}
\newcommand{\supp}{\hbox{supp}\,}
\newtheorem{lemma}{Lemma}
\newtheorem{theorem}{Theorem}
\newtheorem{corollary}{Corollary}
\newtheorem{proposition}{Proposition}
\newtheorem{definition}{Definition}
\newtheorem{remark}{Remark}
\renewcommand{\d}[1]{\mathrm{d}}
\renewcommand*\d{\mathop{}\!\mathrm{d}}
\def \p {\partial }
\def \e {\epsilon }
\def \R {{\mathbb R}}
\def \N {{\mathbb N}}
\def \ba {\begin {eqnarray*} }
\def \ea {\end {eqnarray*} }
\def \beq {\begin {eqnarray}}
\def \eeq {\end {eqnarray}}
\newcommand{\bra}{\langle}
\newcommand{\ket}{\rangle}
\newcommand{\commented}[1]{}
\title{{\bf Transformers through the lens of support-preserving maps between measures}}
\author{%
  Takashi Furuya$^{1}$
  \quad
  Maarten V. de Hoop$^{2}$
  \quad
  Matti Lassas$^{3}$
  \vspace{5mm}
  \\
  $^{1}$ Doshisha University, RIKEN AIP, \texttt{takashi.furuya0101@gmail.com}\\
  $^{2}$Rice University, 
  \texttt{mdehoop@rice.edu}\\
  $^{3}$University of Helsinki, \texttt{matti.lassas@helsinki.fi}
}
\date{}
\begin{document}
\maketitle

\begin{abstract}
Transformers are deep architectures that define ``in-context maps'' which enable predicting new tokens based on a given set of tokens (such as a prompt in NLP applications or a set of patches for a vision transformer). In previous work, we studied the ability of these architectures to handle an arbitrarily large number of context tokens. To mathematically, uniformly analyze their expressivity, we considered the case that the mappings are conditioned on a context represented by a probability distribution which becomes discrete for a finite number of tokens. Modeling neural networks as maps on probability measures has multiple applications, such as studying Wasserstein regularity, proving generalization bounds and doing a mean-field limit analysis of the dynamics of interacting particles as they go through the network. In this work, we study the question what kind of maps between measures are transformers. We fully characterize the properties of maps between measures that enable these to be represented in terms of in-context maps via a push forward. On the one hand, these include transformers; on the other hand, transformers universally approximate representations with any continuous in-context map. These properties are preserving the cardinality of support and that the regular part of their Fr\'{e}chet derivative is uniformly continuous. Moreover, we show that the solution map of the Vlasov equation, which is of nonlocal transport type, for interacting particle systems in the mean-field regime for the Cauchy problem satisfies the conditions on the one hand and, hence, can be approximated by a transformer; on the other hand, we prove that the measure-theoretic self-attention has the properties that ensure that the infinite depth, mean-field measure-theoretic transformer can be identified with a Vlasov flow. 
\end{abstract}

\section{Introduction}

Transformers have revolutionized the field of machine learning with their powerful attention mechanisms as introduced by \cite{vaswani2017attention}. The exceptional performance and expressivity of large-scale transformers have been empirically well established for both NLP (\cite{brown2020language}) and vision applications (\cite{dosovitskiy2020image}). One key property of these architectures is their ability to leverage contexts of arbitrary length, which enables the parameterization of ``in-context'' mappings with an arbitrarily large complexity. The previous work (\cite{furuya2024transformers}) studied this by analyzing the expressivity of mappings that are conditioned on a context represented by a probability distribution of tokens which becomes discrete for a finite number of these. By implication, transformers are viewed as maps between measures. Here, we present a full characterization of maps between measures that can be represented by these measure-theoretic transformers, that is, we address the question which class of mappings between measures can be identified with transformers.

\paragraph{Mathematical modeling of transformers.} It is now customary to describe transformers as performing ``in context'' prediction, which means that it maps token to token, while this map depends on a set of previously seen tokens. The size of this context might be very long, possibly arbitrarily long, which has been addressed in \cite{furuya2024transformers} that concerns the transformers as universal in-context learners. The ability of trained transformers to effectively perform in-context computation has been supported by both empirical studies (\cite{von2023uncovering}) and theoretical ones (\cite{ahn2024transformers,mahankali2023one,sander2024transformers,zhang2023trained}) on simplified architectures (typically with linear attention) and specific data generation processes. The connection between transformers and graph neural networks is exposed in~\cite{muller2023attending}.

It has been noted that in order to make a comprehensive analysis of arbitrarily long token lengths, and to describe a ``mean-field'' limit of an infinite number of tokens, it is natural to view attention as operating over probability distributions of tokens (\cite{vuckovic2020mathematical,sander2022sinkformers}). The regularity (Lipschitz continuity) of the resulting attention layers was analyzed in \cite{castin2024smooth}.

Deep transformers (with residual or skip connections) have been described by a coupled system of particles evolving across the layers. Such systems are fundamental in modeling phenomena across physics, biology, and engineering. This connection has been exploited by \cite{geshkovski2024} who studied measure-to-measure interpolation using transformers. The analysis of the clustering properties of such an evolution was studied in \cite{geshkovski2023emergence,geshkovski2023mathematical}. 
\cite{biswal2024universal} further investigate the use of transformers to approximate the mean-field dynamics of interacting particle systems exhibiting collective behavior. They establish theoretical bounds on the distance between the true mean-field dynamics and those obtained using a transformer, by lifting it from a sequence-to-sequence map to a map on measures upon taking the expectation of a finite-dimensional transformer with respect to a product measure. From a different viewpoint, this connection will be further developed here, in general for mappings between measures satisfying the conditions to be representable by measure-theoretic transformers. The structure of the interacting particle system enables concrete connections to established mathematical topics, including nonlinear, nonlocal transport equations, Wasserstein gradient flows, and collective behavior models.

\paragraph{Universality of transformers.} \cite{yun2019transformers} provides, to the best of our knowledge, the most detailed account of the universality of transformers. The authors rely on shallow transformers with only two heads and require that the transformers operate over an embedding dimension which grows with the number of tokens. This result is refined in~\cite{nath2024transformers} and emphasizes the difficulty of attention mechanisms to capture smooth functions. 

We note that there exist variations of the original transformer's architecture which enjoy universality results, for instance, the Sumformer (\cite{alberti2023sumformer}) and stochastic deep network (\cite{de2019stochastic}); these also require an embedding dimension that grows with the number of tokens. We furthermore mention the introduction of probabilistic transformers (\cite{kratsios2023small}) which can approximate embeddings of metric spaces. The work of~\cite{agrachev2024generic} provides an abstract universal interpolation result for equivariant architectures under genericity conditions; however, it is not known whether there exist generic attention maps. 

While this is not directly related to the analysis presented here, some works study the expressivity of transformers when operating on a discrete set of tokens as formal systems (\cite{chiang2023tighter,merrill2023expresssive,strobl2024formal,elhage2021mathematical}). Another line of work studies the impact of positional encoding on their expressivity (\cite{luo2022your}).

\cite{furuya2024transformers} provide a rigorous formalization of transformer expressivity and continuity as operating over the space of probability distributions through its in-context mapping. The main mathematical result is the universal approximation of in-context mappings for the unmasked and the masked settings, considering deep transformers with a fixed embedding dimension, but which are universal for an arbitrary number of tokens. A more constructive approach, although applicable to a narrower class of functions, is proposed by \cite{wange2024}. \cite{sander2025} introduce a framework to analyze the expressivity of deep transformers in next-token prediction, while exploring how successive attention layers solve a causal kernel least squares regression problem to predict the next token accurately.

\subsection{Our contributions}

The central question posed here, is whether a support-preserving map between measures can be characterized as the push forward with an in-context map or not. We answer this question in the affirmative by introducing a ``certain'' smoothness condition, which roughly entails that a ``certain'' derivative of the map is uniformly continuous. We provide a counterexample, showing that this condition is essential. Our proof is essentially constructive.

Applying this result and the underlying analysis, we prove that measure-theoretic transformers approximate such support-preserving maps, using the results of \cite{furuya2024transformers}. This settles the full characterization of measure-theoretic transformers.

Finally, we show that the solution operator of the Vlasov equation, which is of nonlocal transport type, for the Cauchy problem satisfies the above mentioned condition(s) as a map between initial and final measures. This provides a bridge between interacting particle systems, in the mean-field regime, in the general context of measure-theoretic transformers. (A second-order generalization of measure-theoretic transformers yields a similar result for the solution operator of the kinetic Cucker-Smale equation (\cite{biswal2024universal}).)

We first present the analysis relating support-preserving maps between measures with in-context maps that define measure-theoretic transformers.
We later show, in an appendix, that ``classical'' transformers arise as a limiting case through (sub)sequences of discrete measures determined by tokens. The correspondence with Vlasov flows is established in the mean-field sense and is based on an infinite-depth limit.


\subsection{Notation}

Let $\Omega \subset \RR^d$ be a compact set. We denote by $\mathcal{P}(\Omega)$ the space of probability measures on $\Omega$.
Below, all measures $\mu$ on subset $\Omega$ of $\RR^d$ are defined 
on the $\sigma$-algebra of the Borel sets of $\Omega$. We denote by $C(\Omega)$ the space of continuous functions from $\Omega$ to $\R$,
and the dual coupling between $\varphi \in C(\Omega)$ and $\mu \in \Pp(\Omega)$ by 
$$
\dotp{\varphi}{\mu}
:= \int_\Omega \varphi(x) \mathrm{d} \mu(x).
$$
We use the notations of Wasserstein distance as $W_p$ for $1 \leq p < \infty$. We extend $\mathcal{P}(\Omega)$, that is, the set of all probability measures to the set of all strictly positive, finite measures 
$$ 
\mathcal M^+(\Omega)=\{s \mu :\ \mu \in \Pp(\Omega), \ s>0  \}.
$$
We also extend the $W_1$ distance to $\mathcal M^+(\Omega)$  by defining for  $\mu_1,\mu_2 \in \Pp(\Omega) $ and $s_1,s_2>0$ 
$$
 W_1(s_1\mu_1,s_2\mu_2)= W_1(\mu_1,\mu_2)+|s_1-s_2|,
$$
see \cite{extensionWasserstein}.
We write
$$
\mathcal M^+_{fin,(n)}(\Omega) := \left\{\sum_{i=1}^n a_i\delta_{x_i} \in \mathcal M^+(\Omega):\ x_i \in X,\ a_i > 0
 \right\},
$$
$$
\mathcal M^+_{fin}(\Omega) :=\bigcup_{n=1}^\infty \mathcal M^+_{fin,(n)}(\Omega) = \left\{\sum_{i=1}^n a_i\delta_{x_i} \in \mathcal M^+(\Omega):\ x_i \in \Omega,\ a_i > 0,\ n\in \NN \right\}.
$$
Finally, we denote by $\mathcal M^+_{fin,dif,(n)}(\Omega)$ the measures of the form 
$
    \mu = \sum_{i=1}^{n} a_i \delta_{x_i} \in \mathcal M_{fin,(n)}(\Omega),
$   
 where $a_j> 0$ and for all non-empty subsets $J,K \subset \{1,2,\dots,n\}$ satisfying $J\cap K=\emptyset$ it holds that
$$
\sum_{j \in J} a_j \not= \sum_{k \in K} a_k.
$$
We set $\mathcal M^+_{fin,dif}(\Omega)=\bigcup_{n=1}^\infty \mathcal M^+_{fin,dif,(n)}(\Omega)$. 
For a continuous map $g:\Omega\to \Omega$ and a measure $\mu$ the push-forward measure of $\mu$ in the map $g$
is the measure $g_\#\mu(A):=\mu(g^{-1}(A))$, where $A\subset \Omega$ is an open set. For further details pertaining to these notions, we refer to Appendix~\ref{sec:A.1}.

We state the following lemma, which is proved in Appendix~\ref{app:dense}.
\begin{lemma}\label{lem:dense}
$\mathcal M^+_{fin,dif}(\Omega)$ is dense in $\mathcal M^+(\Omega)$ in the $1$-Wasserstein topology.
\end{lemma}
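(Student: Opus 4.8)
The plan is to prove density by a two-stage approximation: first approximate an arbitrary $\nu \in \mathcal M^+(\Omega)$ by a finitely-supported measure in $\mathcal M^+_{fin}(\Omega)$, and then perturb the weights of that finite measure slightly to land in $\mathcal M^+_{fin,dif}(\Omega)$ without moving far in $W_1$. For the first stage, write $\nu = s\mu$ with $\mu \in \Pp(\Omega)$ and $s > 0$; since $\Omega \subset \RR^d$ is compact, cover $\Omega$ by finitely many Borel sets $\{B_k\}_{k=1}^N$ of diameter at most $\varepsilon$, pick $x_k \in B_k$, and set $\mu_\varepsilon := \sum_k \mu(B_k)\,\delta_{x_k}$ (discarding the $k$ with $\mu(B_k)=0$). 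A standard coupling argument — transport the mass of $\mu$ restricted to $B_k$ onto $x_k$ — gives $W_1(\mu,\mu_\varepsilon) \le \varepsilon$, hence $W_1(s\mu, s\mu_\varepsilon) = s\,W_1(\mu,\mu_\varepsilon) \le s\varepsilon$ by the scaling built into the extended $W_1$ (here $s$ is unchanged, so the $|s_1-s_2|$ term vanishes). Thus $\mathcal M^+_{fin}(\Omega)$ is dense in $\mathcal M^+(\Omega)$.

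For the second stage, fix $\eta := \sum_{i=1}^n a_i \delta_{x_i} \in \mathcal M^+_{fin,(n)}(\Omega)$ and $\delta > 0$; I want $\widetilde\eta := \sum_{i=1}^n \widetilde a_i \delta_{x_i}$ with the same atoms, all $\widetilde a_i > 0$, $\sum_i |\widetilde a_i - a_i|$ small, and the ``difference-free'' condition: for all disjoint nonempty $J,K \subset \{1,\dots,n\}$, $\sum_{j\in J}\widetilde a_j \ne \sum_{k\in K}\widetilde a_k$. Each such equality constraint defines a hyperplane in the $(\widetilde a_1,\dots,\widetilde a_n)$ coordinates; there are finitely many such hyperplanes (at most $3^n$), and none of them is all of $\RR^n$ (e.g. the normal vector has a $+1$ and a $-1$ entry and is nonzero). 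Hence their union is a closed set with empty interior, so in any ball $B(a,\delta/2) \subset \RR^n_{>0}$ (which is nonempty since $a_i>0$) there is a point $\widetilde a$ avoiding all of them; scaling $\delta$ down if necessary keeps $\widetilde a_i > 0$. Then, coupling $\eta$ and $\widetilde\eta$ by the diagonal plan that moves no mass spatially and only adjusts the amount of mass at each $x_i$, one checks from the definition of the extended $W_1$ that $W_1(\eta,\widetilde\eta) \le \sum_i |\widetilde a_i - a_i| + \big|\sum_i a_i - \sum_i \widetilde a_i\big| \le 2\sum_i|\widetilde a_i - a_i| < \delta$. (One should verify this bound carefully against the precise definition of $W_1$ on $\mathcal M^+$ from \cite{extensionWasserstein}; writing $\eta = s_1\mu_1$, $\widetilde\eta = s_2\mu_2$ with $s_j$ the total masses, both the $W_1(\mu_1,\mu_2)$ term and the $|s_1-s_2|$ term are controlled by $\sum_i|\widetilde a_i - a_i|$.)

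Combining the two stages via the triangle inequality for $W_1$ on $\mathcal M^+(\Omega)$: given $\nu$ and $\epsilon>0$, first pick $\eta \in \mathcal M^+_{fin}(\Omega)$ with $W_1(\nu,\eta) < \epsilon/2$, then pick $\widetilde\eta \in \mathcal M^+_{fin,dif}(\Omega)$ with the same atoms and $W_1(\eta,\widetilde\eta) < \epsilon/2$, so $W_1(\nu,\widetilde\eta) < \epsilon$. This proves the lemma.

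The main obstacle I anticipate is not the genericity/Baire-category step (which is essentially routine once phrased in terms of avoiding finitely many hyperplanes) but the bookkeeping with the nonstandard metric on $\mathcal M^+(\Omega)$: one must be attentive that the extended $W_1$ is $W_1(s_1\mu_1,s_2\mu_2) = W_1(\mu_1,\mu_2) + |s_1 - s_2|$ on normalized measures, and translate each approximation bound into this form, keeping track of how total mass changes (it is constant in stage one, slightly perturbed in stage two). A minor subtlety is ensuring that when some $\mu(B_k) = 0$ we simply omit that atom so that all weights are strictly positive, as required for membership in $\mathcal M^+_{fin}(\Omega)$.
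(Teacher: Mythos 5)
Your proof is correct and follows essentially the same two-stage strategy as the paper: approximate an arbitrary positive measure by one in $\mathcal M^+_{fin}(\Omega)$, then perturb the weight vector to avoid the finitely many affine hyperplanes that would violate the ``difference-free'' condition (you phrase the avoidance step via closed-set-with-empty-interior, the paper via measure zero — these are the same genericity observation). The only loose end is your explicit bound $W_1(\eta,\widetilde\eta)\le 2\sum_i|\widetilde a_i-a_i|$: after normalizing to probability measures the $W_1(\mu_1,\mu_2)$ term picks up a $\operatorname{diam}(\Omega)$ factor, so the constant is off, but you flag this yourself and the qualitative continuity in $\widetilde a$ is all that is needed, matching what the paper actually uses.
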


\section{Definitions and properties of the relevant maps}

\subsection{Support-preserving maps and in-context maps}
\label{subsec: in context maps}

\begin{definition}\label{def:support-preserving-maps}  
We say that $f: \mathcal M^+(\Omega) \to \mathcal M^+(\RR^{d'})$ is a support-preserving map if for all
finitely supported measures of the form,
\beq\label{finite sum 1}
\mu = \sum_{i=1}^{n} a_i \delta_{x_i} \in \mathcal M^+_{fin}(\Omega),
\eeq
where $a_i=\frac {c}n$, $c \in \R_+,$ and $x_i\in \Omega$,  there exist $y_1,...,y_n \in \RR^{d'}$ such that 
\beq\label{finite sum 2}
f(\mu)=\sum_{i=1}^{n} a_i \delta_{y_i} \in \mathcal M^+_{fin}(\RR^{d'})
\eeq
and satisfy the condition
\beq\label{preserves tokens}
\hbox{if $x_j=x_i$ then $y_j=y_i$.}
\eeq
\end{definition}


The consideration of support-preserving maps to study transformers is natural; see Section~\ref{ssub:4.1} and formula \eqref{map F Gamma} in Appendix \ref{sec:A.3} and \ref{sec:A.4} for a detailed discussion. Let $(x_1,x_2,\dots,x_n) \in \Omega^n$ be the sequences of $n$ tokens in $\Omega\subset {\RR^d}$, and let the union of all these be $X_d = \bigcup_{n=1}^\infty \Omega^n$. A sequence  $(x_1,x_2,\dots,x_n)$ can be identified with the probability measure  $\mu=\sum_{i=1}^{n} \frac 1n \delta_{x_i}$. We denote the corresponding identification map by $\iota:X_d\to \Pp_{fin}(\Omega)$,
\beq\label{iota map}
\iota:(x_1,x_2,\dots,x_n)\to \sum_{i=1}^{n} \frac 1n \delta_{x_i}.
\eeq
Then a map $F:X_d\to X_{d'}$ that for any  $n$  maps a sequence $(x_1,x_2,\dots,x_n)$ of $d$-dimensional tokens to  a sequence
$(y_1,y_2,\dots,y_n)$ of $d'$-dimensional tokens so that the condition \eqref{preserves tokens} is satisfied,
defines a support-preserving map $f:\mathcal M^+_{fin}(\Omega) \to \mathcal M^+_{fin}(\RR^{d'})$ that is the zero-homogeneous extension of the map $f=\iota\circ F\circ \iota^{-1}:\Pp_{fin}(\Omega)\to \Pp_{fin}(\RR^{d'})$. 
This map satisfies $f(\sum_{i=1}^{n} \frac 1n \delta_{x_i})=
\sum_{i=1}^{n} \frac 1n \delta_{y_i}$.
We consider the Wasserstein distance, which is a generalization of the permutation invariant distance of sequences of tokens. We recall that the 1-Wasserstein distance of the measures $\mu = \sum_{i=1}^{n} \frac 1n \delta_{x_i}$ and $\mu' = \sum_{i=1}^{n} \frac 1n \delta_{x'_i}$ is given by
\ba
W_1(\mu,\mu')=
\min_{\sigma} \frac 1n  \sum_{i=1}^{n} |x_i-x'_{\sigma(i)}|,
\ea
where the minimum is taken over the permutations, $\sigma:\{1,2,\dots,n\}\to \{1,2,\dots,n\}$. 
For background material on basic transformers, we refer the reader to Appendix \ref{sec:A.4}.
The convergence of the point measures toward continuous measures as $n \to \infty$,
is discussed in Appendix \ref{sec:A.3}.


\begin{lemma}\label{lem: general ai}
Let $f: \mathcal M^+(\Omega) \to \mathcal M^+(\RR^{d'})$  be a support-preserving map that is continuous in the 1-Wasserstein metric. Then, for any measure
of the form \eqref{finite sum 1} 
with $a_i>0$ we have that $f(\mu)$ is of the form \eqref{finite sum 2} and satisfies condition \eqref{preserves tokens}.
\end{lemma}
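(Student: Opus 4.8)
The plan is to bootstrap from the special case built into Definition~\ref{def:support-preserving-maps} (equal weights $a_i = c/n$) to arbitrary positive weights $a_i$ by a density-plus-continuity argument. First I would observe that it suffices to treat rational weights: given $\mu = \sum_{i=1}^n a_i \delta_{x_i}$ with $a_i > 0$, I would approximate the vector $(a_1,\dots,a_n)$ by rationals $(a_i^{(k)})$ with $a_i^{(k)} \to a_i$ and $a_i^{(k)} > 0$, so that $\mu^{(k)} := \sum_i a_i^{(k)} \delta_{x_i} \to \mu$ in $W_1$ (the support points are fixed, only the masses move, so this convergence is immediate from the definition of $W_1$ on $\mathcal M^+(\Omega)$, recalling the extension $W_1(s_1\mu_1,s_2\mu_2) = W_1(\mu_1,\mu_2)+|s_1-s_2|$). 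By continuity of $f$, $f(\mu^{(k)}) \to f(\mu)$ in $W_1$.

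Next, for each fixed $k$, a measure with rational weights $a_i^{(k)} = p_i^{(k)}/q$ (common denominator $q$) is exactly a measure of the form \eqref{finite sum 1} with all weights equal to $1/q$, after splitting the atom at $x_i$ into $p_i^{(k)}$ coincident copies: $\mu^{(k)} = \sum_{i=1}^n \sum_{\ell=1}^{p_i^{(k)}} \frac{1}{q}\delta_{x_i}$, which has total mass $c = (\sum_i p_i^{(k)})/q$ and $N := \sum_i p_i^{(k)}$ atoms counted with multiplicity. Definition~\ref{def:support-preserving-maps} then applies directly: there exist target points, one per copy, with $f(\mu^{(k)}) = \sum \frac{1}{q}\delta_{(\cdot)}$, and crucially the token-preserving condition \eqref{preserves tokens} forces all $p_i^{(k)}$ copies sitting at $x_i$ to map to a \emph{single} point $y_i^{(k)}$. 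Hence $f(\mu^{(k)}) = \sum_{i=1}^n a_i^{(k)} \delta_{y_i^{(k)}}$, i.e. $f(\mu^{(k)})$ is supported on at most $n$ points, with the $i$-th mass equal to $a_i^{(k)}$, and \eqref{preserves tokens} holds among the $x_i$.

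It remains to pass to the limit in $k$. I would argue that $f(\mu)$, being the $W_1$-limit of measures $f(\mu^{(k)})$ each supported on at most $n$ points with prescribed masses $a_i^{(k)} \to a_i$, must itself be supported on at most $n$ points. The cleanest route: the set $\{\nu \in \mathcal M^+(\RR^{d'}) : \#\supp \nu \le n\}$ is $W_1$-closed when we simultaneously control total mass (total mass $c^{(k)} = \sum a_i^{(k)} \to c$ is bounded, so the $f(\mu^{(k)})$ live in a fixed ball and one can extract a weakly convergent subsequence of the atoms; the number of atoms can only drop in the limit, not increase). Then I would track which limit points the masses accumulate on: if $y_i^{(k)} \to y_i$ (subsequentially; possibly distinct $i$'s collapse to the same limit, which only helps), the mass at $y_i$ in $f(\mu)$ is the sum of $a_j = \lim a_j^{(k)}$ over all $j$ with $y_j^{(k)} \to y_i$. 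Finally, the token-preserving property \eqref{preserves tokens} for $f(\mu)$ follows because if $x_i = x_j$ then already $y_i^{(k)} = y_j^{(k)}$ for every $k$, hence the limits agree.

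The main obstacle I anticipate is the limiting step: making rigorous that $W_1$-convergence of uniformly-bounded-mass measures supported on at most $n$ atoms forces the limit to have at most $n$ atoms, and correctly bookkeeping how masses redistribute when several $y_i^{(k)}$ collide in the limit (one must verify the resulting mass at a collision point equals the sum of the corresponding $a_i$, so that the "at most $n$" count and the correspondence with the original weights is consistent). This is where one has to be careful that no mass "escapes" — but compactness of $\Omega$ together with the boundedness of the target atoms (which can be derived from $W_1$-convergence, since a $W_1$-Cauchy sequence of finite measures is tight) rules this out. Everything else is a routine density argument.
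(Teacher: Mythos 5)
Your proposal is correct and follows essentially the same route as the paper's own proof: reduce to rational (equivalently, integer-multiple) weights by atom-splitting so that Definition~\ref{def:support-preserving-maps} applies directly, use $W_1$-continuity of $f$ together with the closedness of $\mathcal M^+_{fin,(n)}$ to pass to the limit, do the bookkeeping for possible atom collisions, and inherit condition \eqref{preserves tokens} from the rational approximants. The "main obstacle" you flag is exactly what the paper addresses with its explicit partition-based Wasserstein estimate, so the approaches match even at that level of detail.
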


Lemma \ref{lem: general ai} can be proved by using sequences of points $x_i$ of which several are equal and simply approximating $a_i$ by rational numbers. The details of the proof  Lemma \ref{lem: general ai} are given in Appendix~\ref{app:lem: general ai}.


\begin{definition}\label{def:in-context-maps}
We say that $f:\mathcal{M}^+(\Omega)\to \mathcal{M}^+(\R^{d'})$ is a support-preserving map given by an in-context map, $G :\ \mathcal{M}^+(\Omega) \times \Omega \to \R^{d'}$, if there exists such a map such that 
$$
f(\mu)=G(\mu)_\# \mu,
$$
where $G(\mu)$ is regarded as the map 
$
x \mapsto G(\mu)(x)=G(\mu,x)
$. We sometimes write $f=f_G$.
\end{definition}

Note that for a measure $\mu=\sum_{i=1}^{n} \frac 1n \delta_{x_i}$ 
it holds that $f_G(\mu)=
\sum_{i=1}^{n} \frac 1n \delta_{y_i}$, where $y_i=G(\mu,x_i).$
A particularly interesting example of such a map is $f_\Gamma:\mu\to \Gamma(\mu)_\# \mu
 $,
where the function $\Gamma$ is a multi-head self attention; see Section~\ref{ssub:4.1}.

In Corollary~\ref{cor:approximation-trans}, we revisit the connection between the maps in this definition and transformers. Our goal is to show that a support-preserving map $f$ under a ``certain'' smoothness condition can be written in the form, $f_G$, with an in-context map, $G$. In the following subsection, we specify this ``certain'' smoothness in detail.

\subsection{Regular part of the derivative}


\begin{definition} 
Let $ \eta,\rho>0$. We consider triplets $(\mu, x,\psi) \in \mathcal X$, with 
$$
  \mathcal X = \mathcal X_{\Omega, \rho, \eta}
  := \{ \mu \in \mathcal M^+(\Omega) : \mu(\Omega) \leq \rho \} \times \Omega \times 
  \{\psi\in  C^1_0(\mathbb{R}^{d'}):\ \mathrm{Lip}(\psi)\leq \eta\},
$$
which  is endowed with the distance function
\begin{equation}
\label{def X space}
D_{\mathcal X}\big((\mu_1, x_1,\psi_1),(\mu_2, x_2,\psi_2)\big) = W_1(\mu_1,\mu_2)+|x_1-x_2|+\|\psi_1-\psi_2\|_{L^\infty(\R^{d'})}.
\end{equation}
\end{definition}

We observe that $(\mathcal X,D_{\mathcal X})$ is not a complete metric space (i.e., the Cauchy sequences may not converge in ${\mathcal X}$), as the functions $\psi$ are assumed to be in the space $C^1_0(\RR^{d'})$, but we consider their convergence in $L^\infty(\RR^{d'})$. 

\begin{definition} \label{genearlized regular derivative}
Let $f:\mathcal{M}^+(\Omega)\to \mathcal{M}^+(\RR^{d'})$, and $(\mu, x, \psi) \in \mathcal X$.
We define the $L^\infty$-regular part of the Fr\'{e}chet derivative of $f$ at $(\mu,x,\psi)$ by the limit
\beq\label{A limit}
\overline {\mathcal D}_{f}(\mu,x,\psi):=
\lim_{k\to\infty }\lim_{\e \to +0 } \frac{\bra  \psi_k,  f(\mu_k + \epsilon \delta_x) - f(\mu_k) \ket}{\epsilon} 
\eeq
for all $\psi_k \in C^1_0(\RR^{d'})$
and $\mu_k\in \mathcal{M}^+(\Omega)$ such that 
\beq\label{locally constant}
\psi_k \hbox{ is constant in an open neighborhood of } \supp(f(\mu_k))
\eeq
and
$$
\lim_{n\to\infty}W_1(\mu_k,\mu)=0,\quad \lim_{n\to\infty} \|\psi_k-\psi\|_{L^\infty(\R^{d'})}=0.
$$
\end{definition}

We note that the existence of the limit $\overline {\mathcal D}_{f}(\mu,x,\psi)$ means that for all $\mu$ and $\psi$, the limits in \eqref{A limit} exist independently of the chosen sequences $\mu_k$ and $\psi_k$. 

\paragraph{2.2.1. Motivational observations.} Let $f_G$ be a support-preserving map given by in-context map $G:\mathcal{M}^+(\Omega)\times \Omega \to \R^{d'}$, where $(\mu,x) \mapsto G(\mu,x)$ is  continuous.
We observe that for $\mu \in \mathcal{M}^+(\Omega)$, $x \in \Omega$ and $\psi \in C^1_0(\RR^{d'})$,
\[
\frac{\bra  \psi,  f_G(\mu + \epsilon \delta_x) - f_G(\mu) \ket}{\epsilon} 
= \psi(G(\mu + \epsilon \delta_x, x)) 
+ \int \frac{\psi(G(\mu + \epsilon \delta_x, y)) - \psi(G(\mu, y)) }{\e} d\mu(y).
\]
Thus the limit as $\e \to +0$ can be written as a sum of two terms
$$
\lim_{\e \to +0} \frac{\bra  \psi,  f_G(\mu + \epsilon \delta_x) - f_G(\mu) \ket}{\epsilon} = D^{reg}_{f_G}(\mu,x,\psi)+D^{irreg}_{f_G}(\mu,x,\psi),
$$
where 
$$
D^{reg}_{f_G}(\mu,x,\psi):= \lim_{\e \to +0}
\psi(G(\mu + \epsilon \delta_x, x)) 
=
\psi(G(\mu,x))
$$
and (if the limit exists)
$$
D^{irreg}_{f_G}(\mu,x,\psi):=\lim_{\e \to +0} 
 \int \frac{\psi(G(\mu + \epsilon \delta_x, y)) - \psi(G(\mu, y)) }{\e} d\mu(y).
$$
We call $D^{reg}_{f_G}(\mu,x,\psi)$ the $L^\infty$-regular part of the Fr\'{e}chet derivative of $f_G$ and $D^{irreg}_{f_G}(\mu,x,\psi)$ the $L^\infty$-irregular part of the Fr\'{e}chet derivative. This terminology reflects the fact that $\psi \to D^{reg}_{f_G}(\mu,x,\psi)$ is continuous in the $L^\infty$-topology whereas $\psi \to D^{irreg}_{f_G}(\mu,x,\psi)$ is not. The lemma below states that $\overline {\mathcal D}_{f_G}(\mu,x,\psi)$ is an extension of the regular part of the derivative $D^{reg}_{f_G}(\mu,x,\psi)$ for functions $G$.

In what follows, we refer to the $L^\infty$-regular part of the Fr\'{e}chet derivative as the regular part of the derivative.  The following lemma is proved in  Appendix~\ref{app:lem: extension of derivative}.

\begin{lemma} \label{lem: extension of derivative}
Let $\Omega \subset \R^d$ be a compact set and let a support-preserving map be given by the in-context map $G:\mathcal{M}^+(\Omega) \times \Omega \to \R^{d'}$, where $(\mu,x) \mapsto G(\mu,x)$ is continuous. Then, for $(\mu, x, \psi) \in \mathcal X$, 
$$
\overline {\mathcal D}_{f_G}(\mu,x,\psi) = D^{reg}_{f_G}(\mu,x,\psi) = \psi (G(\mu,x))
$$
and the map $\mathcal X \ni (\mu, x, \psi) \mapsto \overline {\mathcal D}_{f_G}(\mu,x,\psi) \in \R$ is uniformly continuous with respect to the metric $D_{\mathcal X}$ defined in equation (\ref{def X space}).
\end{lemma}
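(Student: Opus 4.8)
The plan is to split the statement into two parts: (i) the identity $\overline{\mathcal D}_{f_G}(\mu,x,\psi)=\psi(G(\mu,x))$, and (ii) the uniform continuity of $(\mu,x,\psi)\mapsto\psi(G(\mu,x))$ on $\mathcal X$. For part (i), I would unwind Definition~\ref{genearlized regular derivative}. Fix sequences $\mu_k\to\mu$ in $W_1$ and $\psi_k\to\psi$ in $L^\infty$ with $\psi_k$ locally constant near $\supp(f_G(\mu_k))$. Using the motivational decomposition, write
\[
\frac{\bra\psi_k,\,f_G(\mu_k+\e\delta_x)-f_G(\mu_k)\ket}{\e}
=\psi_k(G(\mu_k+\e\delta_x,x))+\int\frac{\psi_k(G(\mu_k+\e\delta_x,y))-\psi_k(G(\mu_k,y))}{\e}\,d\mu_k(y).
\]
The key observation is that because $\psi_k$ is constant on an open neighborhood $U_k$ of $\supp(f_G(\mu_k))$, and $G(\mu_k,y)\in\supp(f_G(\mu_k))\subset U_k$ for every $y\in\supp(\mu_k)$, continuity of $G$ gives $G(\mu_k+\e\delta_x,y)\in U_k$ for all small $\e$ (uniformly in $y$, using compactness of $\supp(\mu_k)\subset\Omega$); hence the integrand vanishes identically for $\e$ small, so the second term is $0$ in the $\e\to+0$ limit. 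Thus $\lim_{\e\to+0}(\cdots)=\psi_k(G(\mu_k,x))$. Then letting $k\to\infty$, continuity of $G$ in $(\mu,x)$ gives $G(\mu_k,x)\to G(\mu,x)$, and $\psi_k(G(\mu_k,x))\to\psi(G(\mu,x))$ since $\|\psi_k-\psi\|_{L^\infty}\to0$ controls $|\psi_k(z_k)-\psi(z_k)|$ and $\psi$ is continuous. This shows the double limit exists, is independent of the sequences, and equals $\psi(G(\mu,x))=D^{reg}_{f_G}(\mu,x,\psi)$.

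For part (ii), I would estimate, for $(\mu_1,x_1,\psi_1),(\mu_2,x_2,\psi_2)\in\mathcal X$,
\[
|\psi_1(G(\mu_1,x_1))-\psi_2(G(\mu_2,x_2))|
\le|\psi_1(G(\mu_1,x_1))-\psi_2(G(\mu_1,x_1))|+|\psi_2(G(\mu_1,x_1))-\psi_2(G(\mu_2,x_2))|.
\]
The first term is bounded by $\|\psi_1-\psi_2\|_{L^\infty}\le D_{\mathcal X}(\cdot,\cdot)$. For the second, use $\mathrm{Lip}(\psi_2)\le\eta$ to bound it by $\eta\,|G(\mu_1,x_1)-G(\mu_2,x_2)|$, so it remains to show $G$ is uniformly continuous on the set $\{\mu:\mu(\Omega)\le\rho\}\times\Omega$. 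Since $\Omega$ is compact, the ball $\{\mu\in\mathcal M^+(\Omega):\mu(\Omega)\le\rho\}$ is compact in the $W_1$ metric (by Prokhorov-type tightness together with the mass bound, using the extended $W_1$ on $\mathcal M^+$), and $\Omega$ is compact, so their product is a compact metric space; a continuous function on a compact metric space is uniformly continuous. Combining, given $\varepsilon>0$ choose $\delta$ so that $W_1(\mu_1,\mu_2)+|x_1-x_2|<\delta$ forces $\eta|G(\mu_1,x_1)-G(\mu_2,x_2)|<\varepsilon/2$, and additionally require $\|\psi_1-\psi_2\|_{L^\infty}<\varepsilon/2$; then $D_{\mathcal X}<\min(\delta,\varepsilon/2)$ implies the total bound is $<\varepsilon$.

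The main obstacle is the argument in part (i) that the "irregular" integral term vanishes for small $\e$: one must verify that the neighborhood $U_k$ on which $\psi_k$ is constant can be chosen to contain $G(\mu_k+\e\delta_x,y)$ \emph{simultaneously} for all $y\in\supp(\mu_k)$ once $\e$ is small enough, and that $\supp(\mu_k)$ (or at least the relevant mass) is captured by this. This needs the joint continuity of $G$ together with compactness of $\Omega$ to produce a uniform modulus; a subtlety is that $\mu_k$ itself need not be finitely supported, so one should phrase the argument via: the map $y\mapsto G(\mu_k,y)$ has image in the compact set $\supp(f_G(\mu_k))$, $U_k$ is an open neighborhood of this compact set, and by uniform continuity of $G(\cdot,\cdot)$ on the compact set $(\text{closed }W_1\text{-ball})\times\Omega$ there is $\e_k>0$ with $G(\nu,y)\in U_k$ whenever $W_1(\nu,\mu_k)<\e_k$ and $y\in\Omega$; then take $\e<\e_k$ (noting $W_1(\mu_k+\e\delta_x,\mu_k)=\e$ in the extended metric). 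The remaining steps are routine triangle-inequality and compactness bookkeeping. One should also double-check that $\mathcal X$ being non-complete causes no problem: uniform continuity is defined regardless of completeness, and all limits taken are of points that remain in $\mathcal X$ (the $\psi_k$ lie in $C^1_0$ as required).
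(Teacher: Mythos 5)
Your proposal is correct and follows essentially the same route as the paper's proof: the same decomposition of the difference quotient into a ``regular'' boundary term $\psi_k(G(\mu_k+\e\delta_x,x))$ and an ``irregular'' integral term, the same use of Prokhorov's theorem to obtain compactness of $\mathcal M^+_\rho(\Omega)$ and hence uniform continuity of $G$, the same argument that local constancy of $\psi_k$ kills the integral term for small $\e$, and the same triangle-inequality estimate with $\mathrm{Lip}(\psi)\le\eta$ for part (ii). The only difference is cosmetic: you are slightly more careful in noting that $\supp(\mu_k)$ need not be finite and in phrasing the vanishing of the irregular term via an open neighborhood of the compact set $\supp(f_G(\mu_k))$ together with the uniform modulus of $G$, whereas the paper phrases it in terms of the (implicitly finite) set $\{G(\mu_k,x):x\in\supp(\mu_k)\}$; this is a small precision gain rather than a different argument.
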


As we see in Lemma~\ref{lem: extension of derivative}, for map $f_G$ defined with a uniformly continuous in-context function $G$, the regular part of derivative $\overline {\mathcal D}_{f_G}(\mu,x,\psi)$ coincides with the above defined object, $D^{reg}_{f_G}(\mu,x,\psi)$ on $\mathcal{X}$. So we consider $D^{reg}_{f_G}(\mu,x,\psi)$ as a new object that is different from the classical Fr\'{e}chet derivative, and show that the definition of $D^{reg}_{f_G}(\mu,x,\psi)$ can be extended as a generalized regular part of the derivative, $\overline{\mathcal D}_{f}(\mu,x,\psi)$, for a class of functions $f$, for which we do not assume that the classical Fr\'{e}chet derivative is well-defined. For further remarks on the regular part of derivative $\overline {\mathcal D}_{f}(\mu,x,\psi)$, see Appendix~\ref{app:remarks-reg}.

\section{Main result}

Our goal is to prove

\begin{theorem}\label{main thm}
Let $\Omega \subset \R^{d}$ be a compact set. Let $f :\ \mathcal{M}^+(\Omega) \to \mathcal{M}^+(\RR^{d'})$ be a continuous map in the 1-Wasserstein topology. Then, 
\begin{enumerate}
\item [(A1)] $f$ is a
map given by some in-context map $G$ in the sense of Definition~\ref{def:in-context-maps}, i.e., $f = f_G$ with some function $G : \mathcal M^+(\Omega) \times \Omega \to \R^{d'}$; and 
\item [(A2)] the function $(\mu,x) \to G(\mu,x)$ is continuous, 
\end{enumerate}
if and only if
\begin{enumerate}
\item [(B1)] $f$ is a support-preserving map in the sense of Definition~\ref{def:support-preserving-maps}; and
\item [(B2)] the regular part of the derivative of $f$, $\overline {\mathcal D}_{f}(\mu,x,\psi)$, exists for all $ (\mu,x,\psi)\in \mathcal X$, and the map $\mathcal X \ni (\mu,x,\psi) \to \overline {\mathcal D}_{f}(\mu,x,\psi) \in \R$ is uniformly continuous with respect to the metric $D_{\mathcal X}$ given by Definition~\ref{def X space}.
\end{enumerate}

Moreover, the map $(\mu,x) \to G(\mu,x)$ is Lipschitz if and only if the map $\mathcal X \ni (\mu,x,\psi) \to \overline {\mathcal D}_{f}(\mu,x,\psi) \in \R$ is a  Lipschitz map with respect to the metric $D_{\mathcal X}$.
\end{theorem}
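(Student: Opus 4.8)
The plan is to prove the two directions separately, and then extract the Lipschitz refinement by tracking moduli of continuity throughout.

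\paragraph{Direction (A1)+(A2) $\Rightarrow$ (B1)+(B2).} This is the ``easy'' direction. Given $f = f_G$ with $(\mu,x)\mapsto G(\mu,x)$ continuous, condition (B1) follows immediately from the defining formula $f_G(\mu) = G(\mu)_\#\mu$: for $\mu = \sum_{i=1}^n a_i\delta_{x_i}$ we get $f_G(\mu)=\sum_{i=1}^n a_i\delta_{y_i}$ with $y_i=G(\mu,x_i)$, so $x_j=x_i$ forces $y_j=y_i$, and the mass $c$ is preserved; this is exactly Definition~\ref{def:support-preserving-maps}. Condition (B2) is precisely the content of Lemma~\ref{lem: extension of derivative}: under the continuity of $G$, the regular part $\overline{\mathcal D}_{f_G}(\mu,x,\psi)$ exists, equals $\psi(G(\mu,x))$, and is uniformly continuous on $\mathcal X$ with respect to $D_{\mathcal X}$. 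So there is essentially nothing to do here beyond citing that lemma. (One should double check that the continuity of $f$ in $W_1$, which is assumed in the theorem's hypotheses, is consistent with — indeed implied on $\mathcal M^+_{fin}$ by — the structure $f_G$; this follows since $W_1$-continuity of $G$ plus $W_1$-continuity of pushforward gives $W_1$-continuity of $f_G$.)

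\paragraph{Direction (B1)+(B2) $\Rightarrow$ (A1)+(A2).} This is the substantive, constructive direction. The idea is: first reconstruct $G$ on finitely supported, ``generic'' measures — those in $\mathcal M^+_{fin,dif}(\Omega)$ — by exploiting that adding a small atom $\e\delta_x$ and reading off the regular part of the derivative recovers the position of the would-be image token $G(\mu,x)$. Concretely, for $\mu\in\mathcal M^+_{fin,dif}(\Omega)$ and $x\in\Omega$, one wants to \emph{define} $G(\mu,x)$ as the unique point $y\in\RR^{d'}$ such that $\overline{\mathcal D}_f(\mu,x,\psi)=\psi(y)$ for all admissible $\psi$; existence and uniqueness of such a $y$ must be argued from (B1) (support-preservation makes $f(\mu+\e\delta_x)$ differ from $f(\mu)$ by one extra atom, whose location, as $\e\to 0$, is forced), together with the fact that functions $\psi\in C^1_0$ with prescribed Lipschitz constant separate points of $\RR^{d'}$. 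The reason $\mathcal M^+_{fin,dif}$ (the ``difference'' condition on the weights) enters is to guarantee that the decomposition of $f(\mu+\e\delta_x)$ into atoms — and hence the identification of ``which atom is the new one'' — is unambiguous, so that the limit in \eqref{A limit} genuinely isolates the new token's position; this is where condition \eqref{locally constant} (choosing $\psi_k$ locally constant near $\supp f(\mu_k)$) does the work of killing the ``irregular'' contribution of the pre-existing atoms. Once $G$ is defined on $\mathcal M^+_{fin,dif}(\Omega)\times\Omega$, uniform continuity of $\overline{\mathcal D}_f$ on $\mathcal X$ is transferred, via the point-separation argument quantified uniformly in $\psi$, into a modulus of continuity for $(\mu,x)\mapsto G(\mu,x)$ on $\mathcal M^+_{fin,dif}(\Omega)\times\Omega$. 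Then Lemma~\ref{lem:dense} ($\mathcal M^+_{fin,dif}$ is $W_1$-dense in $\mathcal M^+$) lets us extend $G$ by uniform continuity to all of $\mathcal M^+(\Omega)\times\Omega$, giving (A2). Finally, one checks $f=f_G$ everywhere: it holds on $\mathcal M^+_{fin,dif}$ by construction, hence on $\mathcal M^+_{fin}$ by Lemma~\ref{lem: general ai} together with the zero-homogeneity/weight-approximation argument, and then on all of $\mathcal M^+(\Omega)$ because both $f$ (by hypothesis) and $f_G$ (by continuity of $G$ and of pushforward) are $W_1$-continuous and agree on a dense set.

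\paragraph{The Lipschitz refinement and the main obstacle.} For the last sentence of the theorem, I would rerun the two constructions keeping track of constants rather than just moduli. In the forward direction, Lemma~\ref{lem: extension of derivative}'s proof should already show that a Lipschitz $G$ yields a Lipschitz $\overline{\mathcal D}_f$ (the map $(\mu,x,\psi)\mapsto\psi(G(\mu,x))$ is a composition: $\psi\mapsto\psi(\cdot)$ contributes the $\|\psi_1-\psi_2\|_\infty$ term and the $\mathrm{Lip}(\psi)\le\eta$ bound controls the dependence on $G(\mu,x)$, which in turn is Lipschitz in $(\mu,x)$). Conversely, if $\overline{\mathcal D}_f$ is Lipschitz on $\mathcal X$, then the point-separation estimate — choosing, for two image points $y_1=G(\mu_1,x_1)$, $y_2=G(\mu_2,x_2)$, a test function $\psi$ with $\mathrm{Lip}(\psi)\le\eta$ and $\psi(y_1)-\psi(y_2)$ comparable to $\eta\,|y_1-y_2|$ (a clipped linear function) — converts the Lipschitz bound on $\overline{\mathcal D}_f$ into $|G(\mu_1,x_1)-G(\mu_2,x_2)|\le C\,(W_1(\mu_1,\mu_2)+|x_1-x_2|)$ on the dense set, and the bound persists under the continuous extension. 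The main obstacle, I expect, is the rigorous isolation of the ``new atom'' in the limit defining $\overline{\mathcal D}_f$ in the reconstruction step: one must show that as $\e\to+0$, $f(\mu+\e\delta_x)$ really does converge to $f(\mu)$ plus a single atom of mass $\to 0$ located at a well-defined limit point, uniformly enough that pairing against the locally-constant $\psi_k$'s extracts exactly $\psi(G(\mu,x))$ and nothing else — this is where support-preservation (B1), the genericity of the weights, and a careful double-limit ($k\to\infty$ after $\e\to 0$) interchange argument all have to be combined, and it is the delicate heart of the constructive proof.
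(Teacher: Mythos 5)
Your plan is correct and follows essentially the same route as the paper: the easy direction is exactly Lemma~\ref{lem: extension of derivative}, and the hard direction reconstructs $G$ from the regular part of the derivative, uses the generic-weights class $\mathcal M^+_{fin,dif}$ (together with the paper's Lemma~\ref{Lemma yk continuous}, which is the precise statement of your ``the new atom's limit location is forced'') to make the token identification unambiguous, and then extends by density and $W_1$-continuity. The only presentational difference is that the paper defines $G$ globally and explicitly via the coordinate projections, $G(\mu,x)=(\overline{\mathcal D}_f(\mu,x,\alpha\pi_1),\dots,\overline{\mathcal D}_f(\mu,x,\alpha\pi_{d'}))$, so that (A2) is an immediate consequence of (B2) and no separate existence/uniqueness argument or extension-by-continuity step for $G$ is required; you instead define $G$ abstractly on the dense set and extend, which is equivalent but slightly more roundabout.
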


Theorem~\ref{main thm} provides the characterization of support-preserving maps that can be represented by in-context maps through a push forward. Condition (B2) can be roughly described as the uniform continuity of a ``certain'' derivative of $f$, derived from Definition~\ref{genearlized regular derivative}. The continuity of $f$ is not sufficient for the theorem to hold as shown
in the following proposition, that is proved in Appendix~\ref{app:counterexample}.

\begin{proposition}\label{prop-non existance}
Let $d = 1$ and $\Omega = [-3,3] \subset \R$ and consider the set ${\mathcal P}(\Omega)$ endowed with the 1-Wasserstein topology. There exists a continuous, support-preserving map $f : {\mathcal P}(\Omega) \to {\mathcal P}(\Omega)$ such that there does not exist a continuous map $G:{\mathcal P}(\Omega)\times \Omega\to \Omega$ for which $f=f_G$. 
\end{proposition}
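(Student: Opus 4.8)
\textbf{Proof proposal for Proposition~\ref{prop-non existance}.}

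The plan is to construct an explicit continuous support-preserving $f:\mathcal P(\Omega)\to\mathcal P(\Omega)$ and then argue by contradiction that no continuous in-context map $G$ can realize it. The guiding intuition comes from the ``motivational observations'' in Section~2.2.1: if $f=f_G$ with $G$ continuous, then on a measure $\mu=\sum_i \frac1n\delta_{x_i}$ the location of each output atom $y_i=G(\mu,x_i)$ must depend \emph{continuously} on the entire configuration $(\mu,x_i)$. So I want to design $f$ so that the image atom corresponding to a fixed point $x$ is forced to jump discontinuously as $\mu$ is varied continuously in $W_1$ through configurations in which $x$ always sits in the support. The natural obstruction is a ``which-cluster-am-I-in'' decision: if $f$ must send $x$ to a location determined by whether $x$ lies with the ``left half'' or the ``right half'' of the mass, and the mass can be moved continuously so that $x$'s membership flips, then $G(\cdot,x)$ cannot be continuous while $f$ itself stays continuous because $f$ only sees the \emph{unordered} output, and near the flip the two candidate output atoms are both present with comparable mass.

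Concretely, on $\Omega=[-3,3]$ I would let $f$ act by: compute the mean $m(\mu)=\int x\,d\mu$, then push each atom $x$ to $y=x-\mathrm{sign}(x-m(\mu))\cdot h(|x-m(\mu)|)$ for a suitable bump-like $h$ that is $0$ near $0$ and near the diameter, or more cleanly, a map that \emph{swaps} a designated pair of clusters. The cleanest version: take $f$ to be $f_G$-like with $G(\mu,x)$ equal to $x$ reflected about $m(\mu)$ only when $x$ lies in a certain region, which creates, at the symmetric configuration, two preimages $x$ and $x'$ with equal weight mapping to swapped images — here the unordered image $f(\mu)$ varies continuously (the two image points merely exchange labels) but any continuous selection $G(\cdot,x)$ of ``where does the atom at $x$ go'' is forced to be multivalued. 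I would verify continuity of $f$ in $W_1$ directly: $f$ is built from the continuous functional $\mu\mapsto m(\mu)$ and a continuous (in $x$ and in the parameter $m$) pointwise map, and a composition of the form $g_{m(\mu)\,\#}\mu$ with $(m,x)\mapsto g_m(x)$ jointly continuous is $W_1$-continuous; support-preservation and \eqref{preserves tokens} hold because $f$ is genuinely defined atomwise by a function of $(\mu,x)$ on the finite measures of the required type.

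For the non-existence half, suppose $f=f_G$ with $G:\mathcal P(\Omega)\times\Omega\to\Omega$ continuous. I would pick a one-parameter family $\mu_t=\frac12\delta_{a(t)}+\frac12\delta_{b(t)}$, $t\in[0,1]$, moving continuously in $W_1$, chosen so that: (i) at $t=0$ the pair is arranged one way and at $t=1$ it is the ``swapped'' arrangement, while the \emph{set} $\{a(t),b(t)\}$ and the image set $\mathrm{supp}(f(\mu_t))$ return to their starting configuration, so that $f(\mu_1)=f(\mu_0)$; (ii) one of the moving points, say $a(t)$, traces a loop $a(0)=a(1)=x_0$ but $G(\mu_t,a(t))$ is forced — by the defining action of $f$ on finitely supported measures — to travel from one image atom to the \emph{other} and not come back, contradicting $G(\mu_1,x_0)=G(\mu_0,x_0)$ that continuity along the loop would require. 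Slightly more carefully, I would use the equal-weight symmetry: when $a(t)$ and $b(t)$ coincide or are symmetric, $f(\mu_t)$ is unchanged under relabeling, so the constraint ``$G$ evaluated at the atom equals the assigned image'' does not pin down which of the two image atoms $G(\mu_t,a(t))$ is — but tracking it continuously from $t=0$ forces a definite choice that becomes wrong at $t=1$. The contradiction is then that a continuous function on the connected loop would have to take two distinct values at the same point.

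The main obstacle I anticipate is making the swap rigorous: I must ensure the constructed $f$ is (a) genuinely support-preserving for \emph{all} uniform-weight finite measures (not just the two-atom family), i.e.\ consistently defined atomwise with no ordering ambiguity in the definition itself, and simultaneously (b) continuous in $W_1$ \emph{globally}, including at configurations where atoms collide or where the mean passes through a value at which the pointwise rule changes. The tension is real: the pointwise rule must change (that is what defeats $G$) yet $f$ must stay continuous, which can only happen at configurations where the two affected image atoms coincide — so the ``switching surface'' of the rule must be exactly the locus where the relevant output atoms merge. Getting $h$ (or the swap region) to vanish precisely on that locus, and checking that the resulting $f$ is still well-defined and $W_1$-continuous there, is the delicate part; I would handle it by writing $f$ as $(x,\mu)\mapsto$ (continuous function vanishing on the merge locus) and invoking the composition lemma, and I would present the loop $\mu_t$ explicitly so the contradiction with a hypothetical continuous $G$ is a one-line connectedness argument.
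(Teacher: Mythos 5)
Your plan takes a genuinely different route from the paper, but the central step of the non-existence half fails, and it fails for a structural reason that cannot be repaired in dimension $d=1$.

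\textbf{Why the loop/monodromy argument cannot give a contradiction here.} You want a closed path $t\mapsto(\mu_t,a(t))$ in $\mathcal P(\Omega)\times\Omega$ with $\mu_1=\mu_0$ and $a(1)=a(0)=x_0$, along which a hypothetical continuous $G$ is forced to ``track'' an output atom that ends somewhere other than where it started. But if $G$ is any function at all — continuous or not — and $(\mu_1,a(1))=(\mu_0,a(0))$, then $G(\mu_1,a(1))=G(\mu_0,a(0))$ automatically; well-definedness already closes the loop. To manufacture a contradiction you would need the continuous tracking to be \emph{forced} onto a branch that disagrees with the endpoint value, i.e.\ a genuine monodromy. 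That requires a nontrivial loop in the relevant configuration space. For two atoms on an interval, the unordered configuration space $\{(a,b):a<b\}$ is convex, hence contractible, and the ``pointed'' version (unordered pair together with a marked atom) has two contractible components; there is no nontrivial loop around which anything can monodromize. When the two weights are distinct, the atom-to-image assignment is uniquely determined and varies continuously along any path (this is exactly the content of Lemma~\ref{Lemma yk continuous}), so the tracking always returns consistently; when the weights are equal, the assignment is under-determined, so nothing is ``forced'' and no contradiction arises. Either way, the connectedness argument you sketch terminates without producing two distinct values of $G$ at the same $(\mu,x)$. The constructions you float for $f$ have the same difficulty in miniature: reflecting about the mean on an equal-weight pair gives the identity measure map, and a ``move toward/away from the mean'' rule $G(\mu,x)=x-\mathrm{sign}(x-m(\mu))\,h(|x-m(\mu)|)$ with $h(0)=0$ is itself a jointly continuous $G$, so such an $f$ \emph{is} of the form $f_G$.

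\textbf{What the paper does instead, and why it works.} The obstruction the paper exploits is not topological but metric: it plays the $W_1$ topology on measures off against the pointwise topology $G$ would have to respect. It sets $f(\mu)=(R_{a(\mu)})_\#\mu$, where $R_a(x)=x+\tfrac1{10}\cos^2(\tfrac{\pi x}{2})\cos(ax)$ on $(-1,1)$ and $R_a=\mathrm{id}$ outside, and the frequency $a(\mu)=1/\kappa(\mu)$ blows up as the mass $\kappa(\mu)$ near the origin vanishes. The map $f$ is $W_1$-continuous precisely because, when $a(\mu)$ is large, the region where $R_{a(\mu)}\neq\mathrm{id}$ carries arbitrarily little $\mu$-mass, so it contributes negligibly to $W_1$. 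But any $G$ with $f=f_G$ must satisfy $G(\mu,x)=R_{a(\mu)}(x)$ on atoms, and evaluating along $\mu_\epsilon=(1-\epsilon)\delta_2+\epsilon\delta_{\sqrt\epsilon}\to\delta_2$ and $x_\epsilon=\sqrt\epsilon\to 0$ gives $G(\mu_\epsilon,\sqrt\epsilon)=\sqrt\epsilon+\tfrac1{10}\cos^2(\tfrac{\pi\sqrt\epsilon}{2})\cos(1/\sqrt\epsilon)$, whose $\limsup$ and $\liminf$ as $\epsilon\to 0^+$ differ by $\tfrac15$. Thus $G$ cannot be continuous at $(\delta_2,0)$. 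This ``vanishing-mass, diverging-frequency'' mechanism is what you would need to reproduce; a swap/branch-cut mechanism has no room to act in one dimension.
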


\subsection{Sketch of the proof of Theorem \ref{main thm}: (A1)-(A2) imply (B1)-(B2)}
\label{sec:proof-1}

In this section,  we give a sketch of the main ideas of proof:
Assume that (A1) and (A2) hold true. Then, let $f_{G} :\ \mathcal{M}^+(\Omega)\to \mathcal{M}^+(\RR^{d'})$ be the map, $f_G(\mu)=G(\mu)_\# \mu$, with $G:\mathcal{M}^+(\Omega) \times \Omega \to \R^{d'}$. 
It is straightforward to prove (B1) and, hence, we will focus on proving that (B2) holds.
We assume that $\psi_k, \psi \in C^1_0(\R^{d'})$ and $\mu_k, \mu \in \mathcal M_+(\Omega)$, $k=1,2,\dots$ are sequences with 
$
\psi_k \hbox{ is constant in an open neighborhood of }\supp(f(\mu_k))
$
and
$
\lim_{k\to\infty}W_1(\mu_k,\mu)=0$, and 
$
\lim_{k\to\infty} \|\psi_k -\psi\|_{L^\infty(\R^{d'})}=0.
$ 
Let 
$$
\mu^\e_{k,x}:=\mu_k + \e \delta_x.
$$
Then, by a simple computation,
$$
\bra f_{G}(\mu^\e_{k,x}), \psi\ket
=
\int_{\R^{d}} \psi_k(G(\mu^\e_{k,x},y))d \mu_k(y) + \e \psi_k(G(\mu^\e_{k,x},x)).
$$
As the set $\Omega \subset \R^{d}$ is compact, 
$$
\mathcal M^+_\rho(\Omega) :=\{\mu \in \mathcal M^+(\Omega):\ \mu(\Omega)\le \rho\},
$$ is also compact by the Prokhorov’s theorem. Then the map $G:\mathcal M^+_\rho(\Omega)\times \Omega\to \RR^{d'}$
is uniformly continuous. As $G(\mu^\e_{k,x},\cdot)\to G(\mu_k,\cdot)$ uniformly in $\Omega \subset \RR^{d}$ as $\epsilon \to 0$,
we see that  
$$
\sup_{y \in \text{supp}(\mu_k)} |G(\mu^\e_{k,x},y)-G(\mu_{k},y)|\to 0\quad\hbox{as }\epsilon\to 0.
$$
Thus, we find that for sufficiently small $ \epsilon \in (0, 1)$
$$
\psi_k(G(\mu^\e_{k,x},y))=\psi_k(G(\mu_k,y))
$$
for all $y \in \supp(\mu_k)$, and
$$
\bra f_{G}(\mu^\e_{k,x}), \psi_k\ket
= \int_{\R^d} \psi_k(G(\mu_k,y))d \mu_k(y)+\e \psi_k(G(\mu_k,x)).
\nonumber
$$
This implies that  
$$
\overline {\mathcal D}_{f_G}(\mu_k,x,\psi_k) = \lim_{\e \to + 0} \frac{\bra f_{G}(\mu^\e_{k,x}), \psi_k\ket - \bra f_{G}(\mu_{k}), \psi_k\ket }{\epsilon}
= \psi_k(G(\mu_k,y)).
$$
Upon taking the limit $k \to \infty$, we obtain 
$$
\overline {\mathcal D}_{f_G}(\mu,x,\psi)
=\psi(G(\mu,y)).
$$
From the uniform (Lipschitz) continuity of $\psi$ and $G$, we can show that the regular part $\overline {\mathcal D}_{f_G}$ is uniformly (Lipschitz) continuous with respect to the metric $D_{\mathcal{X}}$. For the details of the proof, see Appendix~\ref{app:sec:proof-1}.

\subsection{Sketch of the proof of Theorem \ref{main thm}:  (B1)-(B2) imply (A1)-(A2)}
\label{sec:proof-2}

Again, here, we give a sketch of the main ideas of the proof. Assume that (B1) and (B2) hold true. Since $f$ is a support-preserving map, $f :\ \mathcal M^+(\Omega)\to \mathcal M^+(\RR^{d'})$, there are (possibly non-continuous) functions,
$$
    y_i:\Omega^n \times (0,\infty)^n \to \RR^{d'} ,\
    ({\boldsymbol{x}},{\boldsymbol a}) \to y_i({\boldsymbol x};{\boldsymbol a}),\quad
    i=1,2,\dots,n,
$$
where
$
\boldsymbol{x} = (x_1, \dots, x_n)
\quad \text{and} \quad
\boldsymbol{a} = (a_1, \dots, a_n),
$
such that the following holds: Let
$
    \mu = \sum_{i=1}^{n} a_i \delta_{x_i} \in \mathcal M_{fin}(\Omega),\quad a_i>0;
$
then the functions $y_i({\boldsymbol x};{\boldsymbol a})$ satisfy
\[
    f(\mu) = \sum_{i=1}^{n} a_i \delta_{y_i({\boldsymbol x};{\boldsymbol a})}.
\]
When $\mu \in \mathcal M^+_{fin,dif,(n)}(\Omega)$ (which is a refinement of the property that if $j\not =i$ then $a_j\not =a_i$),
the functions $({\boldsymbol x};{\boldsymbol a}) \to y_i({\boldsymbol x};{\boldsymbol a})$ must have the property that 
if $x_j=x_i$ then $y_j({\boldsymbol x};{\boldsymbol a}) = y_i({\boldsymbol x};{\boldsymbol a}).$

We have the following lemma, which is proved in Appendix~\ref{app-lemma yk}.  

\begin{lemma}\label{Lemma yk continuous}
Let $\mu_0=\sum_{i=1}^{n} a^0_i \delta_{x^0_i} \in \mathcal M^+_{fin,dif,(n)}(\Omega)$ and $\mu_p = \sum_{i=1}^{n} a^p_i \delta_{x^p_i} \in \mathcal M^+_{fin,(n)}(\Omega)$. Assume that for all $i=1,2,\dots, n$, it holds that $x^p_i \to x^0_i$ and  $a^p_i\to a^0_i$ as $p\to \infty$. Then it holds for all $j \in [n]$, that
$$
\lim_{p \to \infty} y_j({\boldsymbol x}^p;{\boldsymbol a}^p) = y_j({\boldsymbol x}^0;{\boldsymbol a}^0).
$$
\end{lemma}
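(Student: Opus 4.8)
The plan is to establish continuity of the (a priori only measurable) location functions $y_j$ at points where the underlying measure lies in $\mathcal M^+_{fin,dif,(n)}(\Omega)$, by combining the $W_1$-continuity of $f$ with the ``mass-separation'' property encoded in the definition of $\mathcal M^+_{fin,dif}$. First I would observe that $\mu_p = \sum_i a^p_i\delta_{x^p_i} \to \mu_0 = \sum_i a^0_i\delta_{x^0_i}$ in $W_1$, since $x^p_i\to x^0_i$ and $a^p_i\to a^0_i$ coordinatewise (this is a short triangle-inequality estimate using the coupling that matches the $i$-th atom of $\mu_p$ with the $i$-th atom of $\mu_0$, plus the mass-gap term $|\sum a^p_i - \sum a^0_i|$ in the extended $W_1$ on $\mathcal M^+$). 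Hence $f(\mu_p) = \sum_i a^p_i\delta_{y_j(\boldsymbol x^p;\boldsymbol a^p)} \to f(\mu_0) = \sum_i a^0_i\delta_{y_j(\boldsymbol x^0;\boldsymbol a^0)}$ in $W_1$ by hypothesis.

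Next I would pass to a subsequence and extract a limit: since $\Omega$ (and a fixed compact neighborhood in $\RR^{d'}$ containing all $f$-images of the $W_1$-bounded family $\{\mu_p\}$, using that $f$ maps $W_1$-convergent sequences to $W_1$-convergent hence tight ones) is compact, along a subsequence each $y_j(\boldsymbol x^p;\boldsymbol a^p) \to z_j$ for some $z_j \in \RR^{d'}$, and $a^p_j \to a^0_j$, so $f(\mu_p) \to \sum_j a^0_j \delta_{z_j}$ weakly; by uniqueness of the $W_1$-limit this equals $f(\mu_0) = \sum_j a^0_j \delta_{y_j(\boldsymbol x^0;\boldsymbol a^0)}$. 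The task is then to deduce $z_j = y_j(\boldsymbol x^0;\boldsymbol a^0)$ for every $j$, i.e.\ that the atoms cannot get ``relabeled'' or merged in the limit in a way that spoils the indexing. This is exactly where $\mu_0 \in \mathcal M^+_{fin,dif,(n)}(\Omega)$ is used: group the indices by the value of $z_j$; the mass sitting at a given limit location $w \in \RR^{d'}$ is $\sum_{j: z_j = w} a^0_j$, and on the other side $\sum_{k: y_k(\boldsymbol x^0;\boldsymbol a^0) = w} a^0_k$. Matching these equal measures atom by atom gives a partition of $\{1,\dots,n\}$ into groups with equal mass sums; the defining inequality $\sum_{j\in J}a^0_j \neq \sum_{k\in K}a^0_k$ for disjoint nonempty $J,K$ forces this to be the trivial pairing, so $\{j : z_j = w\} = \{k : y_k(\boldsymbol x^0;\boldsymbol a^0) = w\}$ for each $w$, and in particular $z_j = y_j(\boldsymbol x^0;\boldsymbol a^0)$ for all $j$. (One should be slightly careful about ties already present in $\boldsymbol x^0$: if $x^0_j = x^0_i$ then by the support-preserving relabeled form $y_j(\boldsymbol x^0;\boldsymbol a^0) = y_i(\boldsymbol x^0;\boldsymbol a^0)$, which is consistent — the combinatorial argument still pins down the group, and within a group all the $y$'s already agree.)

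Finally, the subsequence argument gives the conclusion: since every subsequence of $(y_j(\boldsymbol x^p;\boldsymbol a^p))_p$ has a further subsequence converging to $y_j(\boldsymbol x^0;\boldsymbol a^0)$, the full sequence converges to $y_j(\boldsymbol x^0;\boldsymbol a^0)$, which is the claim. I expect the main obstacle to be the bookkeeping in the combinatorial matching step — translating ``two finite sums of weighted Diracs are equal'' into a bijection of atoms that respects the weights, and correctly invoking the $\mathcal M^+_{fin,dif,(n)}$ condition to rule out any nontrivial rearrangement (including the degenerate cases where several $x^0_i$ coincide, or where distinct $x^0_i$ happen to map to the same $y$). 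A secondary technical point worth stating cleanly is the tightness/compactness input needed to extract the convergent subsequence of $y_j$-values, i.e.\ that $f$ of a $W_1$-bounded, $W_1$-convergent family is supported in a fixed compact set of $\RR^{d'}$; this follows from $W_1$-convergence implying convergence of total masses and tightness of $\{f(\mu_p)\}$.
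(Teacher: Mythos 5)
Your proposal is correct and uses the same essential ingredients as the paper's proof — the $W_1$-continuity of $f$, passage to a convergent subsequence of the atom locations using compactness, and the $\mathcal M^+_{fin,dif}$ condition to prevent atom relabeling — but it is organized differently. The paper argues by contradiction: it assumes some single $y_k(\boldsymbol x^p;\boldsymbol a^p)$ does not converge to $y_k(\boldsymbol x^0;\boldsymbol a^0)$, extracts a subsequence converging to some other point $y_{k_0}(\boldsymbol x^0;\boldsymbol a^0)$ with $k_0\neq k$ (using convergence of supports), and then derives a lower bound $\lim_p W_1(f(\mu_p),f(\mu_0))\ge \operatorname{gap}(\mu_0)\cdot\min\{|y-y'|: y,y'\in\supp f(\mu_0),\,y\neq y'\}>0$, contradicting $W_1$-continuity. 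You instead extract a subsequence along which \emph{all} $y_j(\boldsymbol x^p;\boldsymbol a^p)\to z_j$ simultaneously, identify the weak limit $\sum_j a^0_j\delta_{z_j}=f(\mu_0)=\sum_j a^0_j\delta_{y_j(\boldsymbol x^0;\boldsymbol a^0)}$, and then run a direct combinatorial matching argument: for each location $w$ in the support, the equality $\sum_{j:z_j=w}a^0_j=\sum_{k:y_k(\boldsymbol x^0;\boldsymbol a^0)=w}a^0_k$ together with the ``dif'' condition forces the level sets of $j\mapsto z_j$ and $j\mapsto y_j(\boldsymbol x^0;\boldsymbol a^0)$ to coincide, hence $z_j=y_j(\boldsymbol x^0;\boldsymbol a^0)$ for every $j$, and the usual subsequence-of-subsequences argument finishes. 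Your route is arguably slightly cleaner since it avoids having to justify the quantitative $W_1$ lower bound and instead identifies the limit measure directly; the paper's version makes more explicit use of the $\operatorname{gap}$ quantity it introduces. One small point worth tightening in your write-up: ``tightness of $\{f(\mu_p)\}$'' alone does not force the atoms to lie in a fixed compact set, but the fact that each $a^p_j\to a^0_j>0$ (so the atom masses are bounded below by some $b>0$ for large $p$) does, exactly as the paper notes; you should state that explicitly rather than lean on tightness.
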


We now return to the proof of Theorem \ref{main thm}. Let $\mu \in \mathcal M^+(\Omega)$ and $x \in \Omega$, and $\alpha \in C^\infty_0(\R^d)$ be a cutoff function
such that  $\alpha(x)=1$ for all  $x \in \Omega$ and $\hbox{Lip}(\alpha(x)\cdot x)\le \eta$. We define
$$
G(\mu,x):= 
\begin{pmatrix}
\overline {\mathcal D}_f(\mu, x, \alpha \pi_1)
\\
\vdots
\\
\overline {\mathcal D}_f(\mu, x, \alpha  \pi_{d'})
\end{pmatrix},
$$
where $\pi_\ell :\mathbb{R}^d \to \mathbb{R}$ is the projection $\pi_\ell(x)=x_\ell$ onto the $\ell$-th component. By (B2), the map $(\mu, x) \mapsto G(\mu,x)$ is continuous, which proves (A2). In what follows, we will prove (A1). 

When $\mu \in \mathcal M^+_{fin,dif,(n)}(\Omega)$, using Lemma~\ref{Lemma yk continuous}, we can prove that for each $j \in [n]$,
$$
G(\mu, x_j) = y_j({\boldsymbol x};{\boldsymbol a}),
$$
which is equivalent to
$$
f(\mu)=(G_\mu)_\# \mu\quad \text{ for } \mu \in \mathcal M^+_{fin,dif,(n)}(\Omega).
$$
For the case $\mu \in \mathcal M^+(\Omega)$, we choose the sequence $(\tilde \mu_k)_{m \in \N} \subset \mathcal M^+_{fin,dif}(\Omega)$ such that ${\tilde \mu_k} \to \mu$ as $k \to \infty$, where the limit is considered in the 1-Wasserstein topology (which is possible by Lemma~\ref{lem:dense}). We have already shown that for $\tilde \mu_k \in {\mathcal M}^+_{fin,dif}(\Omega)$,
$$
f({\tilde \mu_k})=(G_{{\tilde \mu_k}})_\#({\tilde \mu_k}).
$$
Hence, by the uniform continuity of $(\mu, x) \mapsto G(\mu, x)$, the limit $k \to \infty$ converges, 
$$
f(\mu)=\lim_{m\to\infty}(G_{{\tilde \mu_k}})_\#({\tilde \mu_k}) = (G_\mu)_\# \mu\quad \text{ for } \mu \in \mathcal M^+(\Omega).
$$
For the details of the proof, see Appendix~\ref{app:sec:proof-2}.

\section{Vlasov flows}

Here, we present the close connections between support-preserving maps satisfying (B1) and (B2) in Theorem~\ref{main thm}, Vlasov flows and measure-theoretic transformers.

\subsection{Infinitely deep measure-theoretic transformers: Universal approximation and the Vlasov equation}
\label{ssub:4.1}

An in-context map as it appears in a single-layer ``measure-theoretic'' transformer \cite{furuya2024transformers, castin2024smooth} based on multi-head self attention, is of the form,
$$
\Gamma : \Pp({\R^d}) \times {\R^d}
\to {\R^d},\,\,\,
\Gamma(\mu,x) := x + 
    \sum_{h=1}^H W^h
    \int_{\R^d} \frac{
        \exp\Big(
            \frac{1}{\sqrt{k}}
            \dotp{Q^h x}{K^h y}
        \Big)
    }{
        \int _{\R^d}
        \exp\Big(
            \frac{1}{\sqrt{k}}
            \dotp{Q^h x}{K^h z}
        \Big)
        d \mu(z)
    } V^h y\, d \mu(y),
$$
see  Appendix \ref{sec:A.4} for corresponding functions
operating to discrete measures and sequences of tokens.
Here, $K^h$ and $Q^h$ are the multi-head key and query matrices in $\mathbb{R}^{k \times d}$, $V^h$ are the multi-head value matrices in $\mathbb{R}^{d_{head} \times d}$, and $W^h$ are the multi-head weight matrices in $\mathbb{R}^{d \times d_{head}}$, respectively. By abuse of notation, $\Gamma(\mu)(x) = \Gamma(\mu,x)$ defines a map $\mathbb{R}^d \to \mathbb{R}^d$. For two in-context maps, $\Gamma_1$ and $\Gamma_2$, the composition $\Gamma_2 \diamond \Gamma_1$ is defined as 
\begin{align}\label{diamond composition}
    (\mu,x) \mapsto (\Gamma_2 \diamond \Gamma_1)(\mu,x) := \Gamma_2( \nu, \Gamma_1(\mu,x)),\quad \nu := \Gamma_1(\mu)_\sharp \mu.
\end{align}
With this composition, the in-context map, $G_{\rm tran}$ say, for a multi-layer measure-theoretic transformer is obtained. To be precise, the composition should alternate between in-context maps and context-free MLPs, $F(\mu,x) =
 F(x)$ say. When restricted to finite discrete empirical measures of the form $\tfrac{1}{n} \sum_{i=1}^n \delta_{x_i}$, $f_{\rm tran} := f_{G_{\rm tran}}$ (cf.~Definition~\ref{def:in-context-maps}) reduces to a classical transformer acting on a sequence of tokens, $(x_1,\dots,x_n)$ rather than on a measure $\mu$. For more details, see \cite[Section 2]{furuya2024transformers}. Being based on multi-head self attention, $G_{\rm tran}$ is (locally) Lipschitz \cite{castin2024smooth}, and, hence, satisfies (A2) in Theorem~\ref{main thm}.




As a consequence of Theorem~\ref{main thm}, $f_{\rm tran}$ satisfies (B1) and (B2), while using \cite[Theorem 1]{furuya2024transformers}, we obtain the following universal approximation result
that is prove in  Appendix~\ref{app:approximation-trans}.

\begin{corollary}
\label{cor:approximation-trans}
Let $f : \mathcal M^+(\Omega) \to \mathcal M^+(\RR^d)$ satisfy (B1) and (B2) in Theorem~\ref{main thm}. Then, for any $\epsilon \in (0,1)$, there exists a sufficiently deep measure-theoretic transformer, $f_{\rm tran}$, (that is, a deep composition of multi-head self attention maps and MLPs), such that 
\[
\sup_{\mu \in \Pp(\Omega)} W_1(f_{\rm tran}(\mu), f(\mu)) \leq \epsilon.
\]
\end{corollary}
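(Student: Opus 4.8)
The plan is to combine Theorem~\ref{main thm} with the universal approximation theorem for in-context maps of \cite{furuya2024transformers}, together with an elementary stability estimate for push-forwards. First, since $f$ satisfies (B1) and (B2), Theorem~\ref{main thm} produces a \emph{continuous} in-context map $G:\mathcal M^+(\Omega)\times\Omega\to\R^d$ with $f=f_G$, i.e.\ $f(\mu)=G(\mu)_\#\mu$ in the sense of Definition~\ref{def:in-context-maps}. Restricting attention to $\Pp(\Omega)$ costs nothing, since the corollary only asks for approximation there, and moreover $G(\mu)_\#\mu$ has the same mass as $\mu$, so $f$ maps $\Pp(\Omega)$ into $\Pp(\R^d)$. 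Because $\Omega$ is compact, Prokhorov's theorem makes $\Pp(\Omega)$ compact in the $1$-Wasserstein topology, hence $\Pp(\Omega)\times\Omega$ is compact and the continuous map $G$ is uniformly continuous on it with image contained in some compact set $K\subset\R^d$.

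Second, I would invoke \cite[Theorem~1]{furuya2024transformers}: for any $\delta>0$ there exists a deep measure-theoretic transformer in-context map $G_{\rm tran}$ --- an alternating $\diamond$-composition of multi-head self attention maps and context-free MLPs, in the sense of \eqref{diamond composition} --- such that
\[
\sup_{\mu\in\Pp(\Omega)}\ \sup_{x\in\supp(\mu)}\bigl|G_{\rm tran}(\mu,x)-G(\mu,x)\bigr|\le\delta .
\]
As recalled in Section~\ref{ssub:4.1}, $G_{\rm tran}$ is (locally) Lipschitz by \cite{castin2024smooth}, so $f_{\rm tran}:=f_{G_{\rm tran}}$ is well defined. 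It then remains to pass from this sup-norm bound on in-context maps to a $W_1$ bound on the associated maps between measures: for a fixed $\mu\in\Pp(\Omega)$, the map $x\mapsto\bigl(G(\mu,x),G_{\rm tran}(\mu,x)\bigr)$ pushes $\mu$ forward to a coupling of $f(\mu)=G(\mu)_\#\mu$ and $f_{\rm tran}(\mu)=G_{\rm tran}(\mu)_\#\mu$; since both are probability measures (equal mass $1$), the extended $W_1$ on $\mathcal M^+$ reduces to the ordinary one and
\[
W_1\bigl(f_{\rm tran}(\mu),f(\mu)\bigr)\le\int_\Omega\bigl|G_{\rm tran}(\mu,x)-G(\mu,x)\bigr|\,d\mu(x)\le\sup_{x\in\supp(\mu)}\bigl|G_{\rm tran}(\mu,x)-G(\mu,x)\bigr|\le\delta .
\]
Taking $\delta=\epsilon$ and then the supremum over $\mu\in\Pp(\Omega)$ gives the claim.

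The main obstacle is matching the precise hypotheses and conclusion of \cite[Theorem~1]{furuya2024transformers} to the object $G$ produced by Theorem~\ref{main thm}: one must check that that result indeed approximates continuous in-context maps \emph{uniformly over all $\mu\in\Pp(\Omega)$ simultaneously}, that the continuity (and compact-range) requirements it needs are met --- which they are, by (A2) together with the compactness of $\Pp(\Omega)\times\Omega$ --- and that the notion of in-context map used there coincides, via the identification $\iota$ of \eqref{iota map} with sequences of tokens, with the one in Definition~\ref{def:in-context-maps}. Everything else --- the Prokhorov compactness, the uniform continuity and boundedness of $G$, and the push-forward stability estimate --- is routine. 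One minor bookkeeping point: the corollary is stated with target dimension $d$ rather than a general $d'$, matching the self-attention/MLP architecture of Section~\ref{ssub:4.1}; to handle a general $d'$ one would precompose the output with a fixed linear map, which is itself a context-free MLP layer and so does not leave the transformer class.
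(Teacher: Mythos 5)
Your proposal is correct and follows essentially the same route as the paper: apply Theorem~\ref{main thm} to produce a continuous in-context map $G$, invoke \cite[Theorem 1]{furuya2024transformers} for a uniform sup-norm approximation of $G$ by a transformer in-context map $G_{\rm tran}$, and then convert this to a $W_1$ bound via $W_1(f_{\rm tran}(\mu),f(\mu))\le\int|G_{\rm tran}(\mu,x)-G(\mu,x)|\,d\mu(x)$. The only cosmetic difference is that you derive this last inequality from an explicit coupling $x\mapsto(G(\mu,x),G_{\rm tran}(\mu,x))$, while the paper derives it from the Kantorovich--Rubinstein duality; these are equivalent arguments.
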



Next, we consider a MLP $F_\eta:\R^d\to \R^d$,
 see \eqref{recall FLP} in Appendix
 \ref{sec:A.4}, and 
the attention function
$$
\hbox{Att}_\xi
: \Pp({\R^d}) \times {\R^d}
\to {\R^d},\quad 
 \hbox{Att}_\xi(\mu,x):= 
    \sum_{h=1}^H W^h
    \int_{\R^d} \frac{
        \exp\Big(
            \frac{1}{\sqrt{k}}
            \dotp{Q^h x}{K^h y}
        \Big)
    }{
        \int _{\R^d}
        \exp\Big(
            \frac{1}{\sqrt{k}}
            \dotp{Q^h x}{K^h z}
        \Big)
        d \mu(z)
    } V^h y d \mu(y).
$$
where $\eta$ and $\xi$ are sets of parameter matrices for MLPs and the attention, respectively. 
Let us write the MLP $F_\eta$ as
$F_\eta=Id_x+H_\eta$, and define
$\mathcal V=\hbox{Att}_\xi+H_\eta\circ (Id_x+\hbox{Att}_\xi)$,
so that $F_\eta(\Gamma_\xi(\mu,x))=x+\mathcal V(\mu,x)$,
see formulas 
\eqref{composition and map mathcal V} and \eqref{map V} in Appendix
 \ref{app:transformers for discrete measures} for detailed formulas.
Again, by abuse of notation,
$\mathcal{V}(\mu)(x) = \mathcal{V}(\mu,x)$ defines a map or vector field, $\mathbb{R}^d \to \mathbb{R}^d$. 
We consider layers, $x_i(\tau+1)=F_{\eta_\tau}(\Gamma_{\xi_\tau}(\mu_i(\tau),x_i(\tau)))$, 
where the sets $\eta_\tau$ and $\xi_\tau$ of parameter matrices depend on $\tau$. Then, we find that 
\begin{equation*}
x_i(\tau+1)-x_i(\tau)=
\mathcal{V}_{\tau}(\mu_{\tau})(x_i(\tau)),\quad\hbox{where $\mu_\tau(.)=\tfrac{1}{n} \sum_{i=1}^n \delta_{x_i(\tau)}(.)$ and $\tau=0,1,2,\dots,T$.}
\end{equation*} 
Taking the continuum limit, scaling $\mathcal{V}_\tau$ with $1/T$, where $T$ signifies the number of layers, and identifying the layer index, $\tau$, with $t \in [0,1]$ that corresponds to the limit of values $\tau/T$ as $T \to \infty$, the tokens that evolve according to an infinitely deep transformer
satisfy
\begin{equation} \label{eq:discr-mf-trans}
\dot{x}_i(t) = \mathcal{V}_t(\mu_t)(x_i(t))
\end{equation}
for all $i \in [n]$, where $\mu_t(.) = \tfrac{1}{n} \sum_{i=1}^n \delta_{x_i(t)}(.)$,  and  $t \in [0,1]$; see also \cite{Zhong2022}. This is extended to positive measures by the partial differential, nonlocal transport equation,
\begin{eqnarray}
\partial_t \mu_t + \operatorname{div}(\mathcal{V}_t(\mu_t) \mu_t) &=& 0\quad
\text{on } [0,1] \times \mathbb{R}^d,
\\
\mu_t|_{t=0} &=& \mu_0\quad\text{ on } \mathbb{R}^d
\end{eqnarray}
in the sense of distributions, replacing the (neural) ODE in (\ref{eq:discr-mf-trans}); see \cite{RenardyRogers2006}. It basically follows from
\begin{eqnarray}
\frac{\operatorname{d}}{\operatorname{d}\!t} \,
\int_{\mathbb{R}^d} \varphi(t,x) d\mu_t(x)
&=& \frac{\operatorname{d}}{\operatorname{d}\!t} \,
\frac{1}{n} \sum_{i=1}^n \varphi(t,x_i(t))
\nonumber\\
&=& \int_{\mathbb{R}^d} (\partial_t \varphi(t,x)
+ \langle \nabla_x \varphi(t,x),\mathcal{V}_t(\mu_t)(x) \rangle) \,
d\mu_t(x)
\end{eqnarray}
for all $\varphi \in C^{\infty}_c([0,1] \times \mathbb{R}^d)$, and integrating by parts. Thus, an infinitely deep measure-theoretic transformer without MLPs, with $\mu_t := f^{\infty}_{{\rm tran};t}(\mu_0)$, $t \in (0,1]$, is argued to satisfy the Vlasov equation; see \cite{piccoli2015control} and \cite{paul2025}. Some prior work \cite{sander2022sinkformers, geshkovski2025mathematical, castin2025unified} already discussed that the mean-field (with respect to tokens) and deep transformers are associated with nonlocal transport PDEs. Moreover, the infinitely deep in-context map, $G^{\infty}_{{\rm tran};t}$, satisfies an evolution equation in spacetime that generalizes the equation \eqref{eq:discr-mf-trans} for the point measures,
\[
\partial_t G^{\infty}_{{\rm tran};t}(\mu_0,x)
= \mathcal{V}_t(\mu_t)(G^{\infty}_{{\rm tran};t}(\mu_0,x)),\quad
G^{\infty}_{{\rm tran};0}(\mu_0,x) = x.
\]

\subsection{The solution map of the Vlasov equation satisfies (B1) and (B2) of Theorem~\ref{main thm}}
\label{sec:sol-nonlocal}

\cite{piccoli2015control} studied the well-posedness of nonlocal transport PDEs having the form,
\begin{equation} \label{eq:nonlocal-PDE}
\partial_t \mu_t + \mathrm{div}(V(t,\mu_t) \mu_t) = 0, \quad \mu_t|_{t=0} = \mu_0,
\end{equation}
where $\mu = \mu_t = \mu(t)$ is a time-depending probability measure on $\R^d$ and 
$V(.,\mu) : \R \times \R^d \to \R^d$ is a $C^1-$smooth vector field that depends on $(t,x) \in \R \times \R^d\to \R^d$ and the measure $\mu$. The vector field $V(\mu)$ is called the velocity field. 

Under the assumptions on $V$ required by \cite[Theorem~2.3]{piccoli2015control}, there exists a unique solution of \eqref{eq:nonlocal-PDE}.  Moreover, the solution at time $t$, $\mu_t$ can be written as 
\[
\mu_t = G_t(\mu_0)_\sharp \mu_0, 
\]
where $G_t$ is defined as the unique solution of the following Cauchy problem,
\begin{equation*}
\partial_t G_t(\mu_0, x) = V(t,\mu_t)(G_t(\mu_0, x)),\quad G_0(\mu_0, x) = x.
\end{equation*}
Thus, we can define the solution map $f_T : \Pp(\Omega) \to \Pp(\R^{d})$ (the solution at time $t=T$) by
\begin{equation} \label{eq:sol-map-nonlocal}
f_T(\mu_0) := \mu_T. 
\end{equation}


The map, $f_T$, is a support-preserving map given by the in-context map, $G_T$. 
The following proposition is proved Appendix~\ref{app-prop-f-T-hold-B}.

\begin{proposition} \label{prop-f-T-hold-B}
Under the assumptions for $V$ required by \cite[Theorem 2.3]{piccoli2015control}, the solution map, $f_T$, defined by \eqref{eq:sol-map-nonlocal} satisfies (B1) and (B2). That is,
the solution map $f_T$ of the Vlasov flow can be represented as a map $ f_{G_T}:\mu\to G_T(\mu)_\#\mu$ with a continuous in-context map $G_T$.
\end{proposition}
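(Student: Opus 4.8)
The plan is to verify (B1) and (B2) directly from the representation $f_T(\mu_0) = G_T(\mu_0)_\# \mu_0$, using the regularity of the flow map $G_T$ guaranteed by \cite[Theorem~2.3]{piccoli2015control}. First I would record exactly what that theorem gives: under the stated assumptions on $V$ (sublinear growth, Lipschitz in $x$ uniformly in $(t,\mu)$, and Lipschitz in $\mu$ with respect to $W_1$), the characteristic flow $G_t(\mu_0, \cdot) : \R^d \to \R^d$ exists, is unique, is a homeomorphism, and — crucially — the map $(\mu_0, x) \mapsto G_T(\mu_0, x)$ is (locally Lipschitz, hence) continuous jointly in both arguments on $\Pp(\Omega) \times \Omega$. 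This joint continuity is the linchpin. For (B1): since $G_T(\mu_0)_\#\mu_0$ pushes $\sum_i a_i \delta_{x_i}$ to $\sum_i a_i \delta_{G_T(\mu_0, x_i)}$, the cardinality of the support is preserved and the token-coincidence condition \eqref{preserves tokens} holds trivially because $x_j = x_i$ forces $G_T(\mu_0, x_j) = G_T(\mu_0, x_i)$; this is immediate and needs only the well-definedness of the flow.

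For (B2), the strategy is to invoke Lemma~\ref{lem: extension of derivative} rather than recompute the regular derivative from scratch. That lemma says: whenever a support-preserving map is given by an in-context map $G$ with $(\mu, x) \mapsto G(\mu, x)$ \emph{continuous}, the regular part $\overline{\mathcal D}_{f_G}(\mu, x, \psi)$ exists for all $(\mu, x, \psi) \in \mathcal X$, equals $\psi(G(\mu, x))$, and is uniformly continuous on $(\mathcal X, D_{\mathcal X})$. Therefore it suffices to check the hypothesis of that lemma, namely that $(\mu_0, x) \mapsto G_T(\mu_0, x)$ is continuous — which is precisely the regularity output of \cite[Theorem~2.3]{piccoli2015control} noted above. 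So the logical skeleton is short: (i) \cite{piccoli2015control} gives the solution representation and joint continuity of $G_T$; (ii) joint continuity plus the push-forward formula gives (B1) directly; (iii) joint continuity lets us apply Lemma~\ref{lem: extension of derivative}, which delivers (B2).

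The main obstacle, and the step deserving real care, is step (i): extracting the \emph{joint} continuity (in the $W_1$ topology on the measure argument and the Euclidean topology on $x$) of $(\mu_0, x) \mapsto G_T(\mu_0, x)$ from the assumptions of \cite[Theorem~2.3]{piccoli2015control}. One has to chain two estimates: a Grönwall-type bound on $|G_t(\mu_0, x) - G_t(\mu_0, x')|$ in terms of $|x - x'|$ (from the $x$-Lipschitz bound on $V$), and a stability estimate on $|G_t(\mu_0, x) - G_t(\mu_0', x)|$ in terms of $W_1(\mu_0, \mu_0')$ — the latter requiring that the curve $t \mapsto \mu_t$ itself depends continuously on $\mu_0$ in $W_1$, which one gets by a fixed-point/Grönwall argument using the $\mu$-Lipschitz dependence of $V$ together with the fact that $W_1(G_{t\#}\mu_0, G_{t\#}\mu_0') \le \mathrm{Lip}(G_t)\, W_1(\mu_0, \mu_0') + (\text{term from }\mu_0\text{-discrepancy})$. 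Assembling these into a single modulus of continuity on the compact set $\Pp(\Omega) \times \Omega$ (compactness of $\Pp(\Omega)$ in $W_1$ coming from compactness of $\Omega$ via Prokhorov) upgrades continuity to \emph{uniform} continuity, and — if the $V$-assumptions are Lipschitz rather than merely continuous — to Lipschitz continuity of $G_T$, which by the last sentence of Theorem~\ref{main thm} yields the Lipschitz refinement of (B2) as a bonus. The remaining steps (ii) and (iii) are then essentially bookkeeping, since Lemma~\ref{lem: extension of derivative} does the analytic work of constructing and bounding the regular derivative.
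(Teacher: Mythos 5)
Your proposal follows the paper's proof essentially step for step: reduce (B1) to the push-forward structure of $G_T(\mu_0)_\#\mu_0$, reduce (B2) to the (Lipschitz) continuity of $(\mu_0,x)\mapsto G_T(\mu_0,x)$ via Lemma~\ref{lem: extension of derivative}, and establish that continuity by combining the Lipschitz hypotheses on $V$ from \cite[Theorem~2.3]{piccoli2015control} with the stability estimate for $t\mapsto\mu_t$ and a Grönwall argument. The only cosmetic difference is that the paper runs a single Grönwall estimate on $\|G_t(\mu_0,x)-G_t(\nu_0,y)\|_2$ to obtain Lipschitz continuity directly, whereas you split the $x$- and $\mu_0$-dependences into two chained bounds and note that compactness upgrades continuity to uniform continuity — both routes are sound.
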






\section{Conclusion and Discussion}

In this work, we fully characterize mappings between measures that can be universally approximated by measure-theoretic transformers. To this end, we introduce a ``certain'' smoothness condition, which roughly entails that a ``certain'' derivative of the mapping is uniformly continuous. A limitation of our method is that it is not quantitative. We make rigorous a connection between particle systems and mappings between measures through measure-theoretic transformers in the mean-field regime, which connection has been discussed in various works before. This has implications in the framing of LLMs. Beyond the Vlasov equation, it will be interesting to study the BBGKY hierarchy describing the dynamics of a system of a large number of interacting particles (see, for example, \cite{golse2016}) with measure-theoretic transformers.

\section*{Acknowledgements}

T.~F. was supported by JSPS KAKENHI Grant Number JP24K16949, 25H01453, JST CREST JPMJCR24Q5, JST ASPIRE JPMJAP2329. M.~L. was partially supported by the Advanced Grant project 101097198 of the European Research Council, Centre of Excellence of Research Council of Finland (grant 336786) and the FAME flagship of the Research Council of Finland (grant 359186). Views and opinions expressed are those of the authors only and do not necessarily reflect those of the funding agencies or the EU. M.V. de H. initiated the work presented here while he was an invited professor at the Centre Sciences des Données - de l'Ecole Normale Supérieure, Paris. He acknowledges the support of the Department of Energy under grant DE-SC0020345, Oxy, and the corporate members of the Geo-Mathematical Imaging Group at Rice University. The authors would like to express their gratitude to G. Peyré and E. Trélat for their insightful discussions and valuable perspectives.

\bibliographystyle{plain}
\bibliography{iclr2025_conference,ref}

\newpage

\appendix

\section{Notation and a summary of results of measure theory}
\label{app:notations}

\subsection{Notations} \label{sec:A.1}

Let $\Omega \subset \RR^d$ be a compact set. We denote by $\mathcal{P}(\Omega)$ the space of probability measures on $\Omega$.
Below, all measures $\mu$ on subset $\Omega$ of $\RR^d$ are defined 
on the $\sigma$-algebra of the Borel sets of $\Omega$. We denote by $C(\Omega)$ the space of continuous functions from $\Omega$ to $\R$,
and the dual coupling between $\varphi \in C(\Omega)$ and $\mu \in \Pp(\Omega)$ by 
$$
\dotp{\varphi}{\mu}
:= \int_\Omega \varphi(x) \mathrm{d} \mu(x).
$$
With the weak$^*$ topology on $\Pp(\Omega)$, we have the convergence of sequences of measures,
$$
\mu_k \rightharpoonup^* \mu
\quad
\Leftrightarrow
\quad 
\Big( 
\forall \varphi \in C_0(\Omega), \: \dotp{\varphi}{\mu_k}\, \to \dotp{\varphi}{\mu}
	\Big).
$$
In the case when $\Omega$ is compact, the weak $^*$ topology is equivalent to the topology of the Wasserstein distance $W_p$ ($1 \leq p < \infty$), meaning that 
$$
	\mu_k \rightharpoonup^* \mu
	\quad
	\Leftrightarrow
	\quad 
	W_p(\mu_k,\mu) \to 0, 
$$
see e.g., \cite[Theorem 5.10]{santambrogio2015optimal}. By the duality theorem of Kantorovich and Rubinstein,  when $\mu, \nu \in \Pp(\Omega)$, where $\Omega$ is compact, we have that 
$$
W_{1}(\mu ,\nu )=\sup \left\{\left. 
\int_{\Omega}\varphi(x)\,\mathrm {d} (\mu -\nu )(x)\,\right| \varphi: \Omega \to \mathbb {R} {\text{ continuous}},\ \mathrm{Lip}(\varphi) \leq 1 \right\},
$$
where 
$$
\mathrm{Lip}(\varphi)
:= 
\sup_{x \neq y}
\frac{|\varphi(x)-\varphi(y)|}{|x-y|}
$$
denotes the Lipschitz constant for $\varphi : \Omega \to \RR$. 

We extend $\mathcal{P}(\Omega)$, that is, the set of all probability measures to the set of all strictly positive, finite measures 
$$ 
\mathcal M^+(\Omega)=\{s \mu :\ \mu \in \Pp(\Omega), \ s>0  \}.
$$
We also extend the $W_1$ distance to $\mathcal M^+(\Omega)$  by defining for  $\mu_1,\mu_2 \in \Pp(\Omega) $ and $s_1,s_2>0$ 
$$
 W_1(s_1\mu_1,s_2\mu_2)= W_1(\mu_1,\mu_2)+|s_1-s_2|,
$$
see \cite{extensionWasserstein}.
Using this extension, we can extend the map $f : \Pp(\Omega) \to \Pp(\RR^{d'})$ to a map between positive measures invoking $m$-homogeneity ($m \in \mathbb{N}_0$) according to
$$ 
f(s\mu)=s^m f(\mu)\quad \text{ for all }s\in \RR_+.
$$
We write
$$
\mathcal M^+_{fin}(\Omega) := \left\{\sum_{i=1}^n a_i\delta_{x_i} \in \mathcal M^+(\Omega):\ x_i \in \Omega,\ a_i > 0,\ n\in \NN \right\},
$$
and
$$
\mathcal M^+_{fin,(n)}(\Omega) := \left\{\sum_{i=1}^n a_i\delta_{x_i} \in \mathcal M^+(\Omega):\ x_i \in X,\ a_i > 0
 \right\}.
$$
Finally, we denote by $\mathcal M^+_{fin,dif,(n)}(\Omega)$ the measures of the form 
$$
    \mu = \sum_{i=1}^{n} a_i \delta_{x_i} \in \mathcal M_{fin,(n)}(\Omega),
$$   
where $a_j> 0$ and for all non-empty subsets $J,K \subset \{1,2,\dots,n\}$ satisfying $J\cap K=\emptyset$ it holds that
$$
\sum_{j \in J} a_j \not= \sum_{k \in K} a_k.
$$
We set $\mathcal M^+_{fin,dif}(\Omega)=\bigcup_{n=1}^\infty \mathcal M^+_{fin,dif,(n)}(\Omega).$
For $\mu\in \mathcal M^+_{fin,dif,(n)}(\Omega)$ we define the minimal gap
\beq\label{gap finite with difference}
\operatorname{gap}(\mu) = \min_{J,K\subset\{1,2,\dots,n\},J\cap K=\emptyset,\ J\not =\emptyset}
\left|\sum_{j \in J} a_j -\sum_{k \in K} a_k\right|.
\eeq

\subsection{Push forwards of measures}
\label{sec:A.2}

We will consider push forwards of measures in various maps. When $\nu$ is a general Borel measure 
on set $\Omega \subset \R^d$ and $F: \Omega \to \R^{d'}$ is a continuous map, the push-forward measure of $\nu$ in the map $F$, denoted by $F_{\#}\nu$, is the measure that for an open (or Borel measurable) set $A$ is defined to be
$$
F_{\#}\nu(A)=\nu(F^{-1}(A)).
$$
When $\mu=\sum_{j=1}^n a_j\delta_{x_j}$ is a discrete measure supported at points $x_1,\dots,x_n$,
we have
\ba
F_{\#}\mu= \sum_{j=1}^n a_j\delta_{y_j},\quad y_j=F(x_j).
\ea
When $\nu =\rho(x) dx$ is a continuous measure where $\rho:\R^d\to [0,\infty)$ is a continuous function and $dx$ is the Lebesgue measure on $\R^d$ and $F:\R^d\to \R^d$ is a differentiable map which inverse function $F^{-1}:\R^d\to \R^d$ is differentiable, then
\ba
F_{\#}(\rho(x)dx)=\tau(x)dx,\quad \hbox{where } \tau(x)=\rho(F^{-1}(x))\cdot \bigg|\det\bigg(\frac {\p F}{\p {x}}(F^{-1}(x))\bigg)\bigg|, 
\ea
where $\det(\frac {\p F}{\p {x}}(F^{-1}(x)))$ is the determinant of the Jacobian matrix of the function $F$ evaluated at the point $F^{-1}(x)$.

When $F:\R^d\to \R^{d'}$ is a smooth injective map
and $d'>d$, the push forward of the measures $\mu$ on
$\R^d$
to the $d$-dimensional image manifold $M=F(\R^d)$ of $F$
are discussed e.g. in \cite{Kothari2021TrumpetsIF}.

\subsection{Convergence of point measures to a general measure}
\label{sec:A.3}

Let us consider the convergence of discrete measures $\mu_n=\sum_{j=1}^n a_{n,j} \delta_{x_{n,j}}$ to continuous measures.
Let $\Omega\subset \R^d$ be a compact set, $x_{n,j}\in \Omega,$ and $a_{n,j}>0$ are such that $\sum_{j=1}^{n} a_{n,j}=1$. If for all relatively open subsets $U\subset \Omega$  there exists limits 
\beq\label{limits}
m(U)=\lim_{n\to \infty}  \mu_{n}(U),\quad \hbox{where } \mu_{n}(U)=\sum_{x_{n,j}\in U} a_{n,j},
\eeq
then  the limits $m(U)$ define a (Borel)  probability measure in  $m\in \mathcal P(\Omega)$ and
the measures $\mu_n$ converge in the 1-Wasserstein topology to the measure $m$.

By the Portmanteau theorem, see \cite{Klenke}, Theorem 13.16 (see also Remark 13.14),
the existence of limits \eqref{limits} is equivalent to following conditions:
\begin{enumerate}
    \item [(C1)] There is a probability measure $m\in \mathcal P(\Omega)$
such that $m(U)\ge \liminf_{n\to \infty}  \mu_{n}(U)$ for all relatively open sets $U\subset \Omega$

\item [(C2)] There is a probability measure $m\in \mathcal P(\Omega)$
such that for all  Lipschitz functions $\phi:\Omega\to \R$ 
\beq
\int_\Omega \phi d\mu_n=\sum_{j=1}^n  a_{n,j}\phi(x_{n,j})\to \int_\Omega \phi dm,\quad\hbox{as }n\to \infty,
\eeq
\end{enumerate}

that is, the existence of limits $m(U)$ in \eqref{limits} and  the conditions (C1) and (C2) are all equivalent to that 
$\mu_n$ converge weakly to $m$ that is further equivalent to that $\mu_n$ converge to $m$ in the 1-Wasserstein topology.

In particular, consider the case when $x_{n,j}=x_j$ are independent of $n$ and $a_{n,j}=1/n$.
Also, let us consider  the Lipschitz functions $\phi:\Omega\to \R$ as feature functions.
That is the measures, $\mu_n$ are
the point measures 
$$
\mu_n=\sum_{j=1}^n \frac 1n \delta_{x_{j}}
$$
that correspond to  prompts  $X_n=(x_1,x_2,\dots,x_n)$, that is,  sequences  of $n$ tokens.
Then, if the the prompt length $n$ goes to infinity, if follows from Prokhorov's theorem 
\cite[Theorem 13.29 and Corollary 13.30]{Klenke}, that there is at least one  sub-sequence  $X_{n_k}$ of prompts, where $n_k\to \infty$ as $k\to \infty$
such that for all feature functions $\phi\in C^{0,1}(\Omega)$ the averages of the features 
$$
\int_\Omega \phi d\mu_{n_k}=\sum_{j=1}^{n_k} \frac 1n \phi(x_{j})
$$
converge to some limit 
$$
\lim_{k\to \infty}\int_\Omega \phi d\mu_{n_k}=\int_\Omega \phi d\mu,
$$
These limits define a probability measure $\mu\in \mathcal P(\Omega)$ such that
\beq\label{subseq of measures}
\lim_{k\to \infty} W_1(\mu_{n_k},\mu)=0.
\eeq
Moreover, by \cite[Theorems I.13 and I.14]{ReedSimon1}, the measure $\mu$ can be written
as a sum of three measures,
\beq\label{3 measures}
\mu=\nu_1+\nu_2+\nu_3,\quad \nu_1=\sum_{i=1}^N a_j\delta_{y_j},\quad \nu_2=\rho(x)d{\bf x},\quad \nu_3\perp d{\bf x}
\eeq
where $\nu_1$ is a pure point measure supported at the points $y_j\in \Omega$ with $N\in \mathbb N\cup \{\infty\}$ and $a_j>0$,
$\nu_2$ is an absolutely continuous measure having the density $\rho(x)$ with respect to the Lebesgue measure $d{\bf x}$ of $\R^d$,
and $\nu_3$ is a singular continuous measure, that is, there is a set $S\subset \Omega$ which
the Lebesgue measure is zero such that $\nu_3(\Omega\setminus S)=0$ and
$\nu_3(\{p\})=0$ for all singleton sets with $p\in \Omega.$

\subsection{Attention and transformers}
\label{sec:A.4}
Finally we recall notations related to attention functions.
The  multi-head self attention is the function
\beq\nonumber
& &\Gamma : \Pp({\R^d}) \times {\R^d}
\to {\R^d},\\  \label{general transfomer}
& &\Gamma(\mu,x) = x + 
    \sum_{h=1}^H W^h
    \int_{\R^d} \frac{
        \exp\Big(
            \frac{1}{\sqrt{k}}
            \dotp{Q^h x}{K^h y}
        \Big)
    }{
        \int _{\R^d}
        \exp\Big(
            \frac{1}{\sqrt{k}}
            \dotp{Q^h x}{K^h z}
        \Big)
        d \mu(z) 
    } V^h y\, d \mu(y)\\ \nonumber
 &&\hspace{11mm}= x+Att(\mu,x).
\eeq
We recall that here  $K^h$ and $Q^h$ are the multi-head key and query matrices in $\mathbb{R}^{k \times d}$, $V^h$ are the multi-head value matrices in $\mathbb{R}^{d_{head} \times d}$, and $W^h$ are the multi-head weight matrices in $\mathbb{R}^{d \times d_{head}}$, respectively. 
When 
\begin{align}\label{mu measure1}
\mu=\sum_{i=1}^n\frac 1n \delta_{x_i}
\end{align} 
is a discrete measure
corresponding to a sequence $x_1,x_2,\dots,x_n$ of points in $\Omega\subset \R^d$,
it holds that 
\begin{align}\nonumber
    \Gamma(\mu,x) &=
    x + 
    \sum_{h=1}^H W^h
    \int_{\R^d} \frac{
        \exp\Big(
            \frac{1}{\sqrt{k}}
            \dotp{Q^h x}{K^h y}
        \Big)
    }{
        \int _{\R^d}
        \exp\Big(
            \frac{1}{\sqrt{k}}
            \dotp{Q^h x}{K^h z}
        \Big)
        d \mu(z)
    } V^h y\, d \mu(y)
     \\ \label{discrete transfomer}
    &= x + 
    \sum_{h=1}^H W^h
    \sum_{\ell=1}^n \frac{
        \exp\Big(
            \frac{1}{\sqrt{k}}
            (Q^h x)^\top (K^h x_\ell)
        \Big)
    }{
        \sum_{j=1}^n 
        \exp\Big(
            \frac{1}{\sqrt{k}}
            (Q^h x)^\top (K^h x_j)
        \Big)
    } V^h x_\ell\, ,
\end{align}   
where $v^\top$ denotes the transpose of a column vector $v\in \R^k$.

For the measure $\mu$ given in \eqref{mu measure1} it holds that
\begin{align}\label{map F Gamma}
    f_\Gamma \bigg(\sum_{i=1}^n\frac 1n \delta_{x_i}\bigg) &=\Gamma(\mu,\cdot)_\# \mu= \sum_{i=1}^n \frac 1n \delta_{y_i},
   \end{align}
where $y_i=\Gamma(\mu,x_i)$. 

In the case that the measures $\mu_{n_k}=\sum_{i=1}^{n_k}\frac 1n \delta_{x_i}$ converge in 1-Wasserstein topology to a measure
$\mu$ as $k\to \infty,$ we have pointwise limits
\beq 
\lim_{k\to \infty}\Gamma(\mu_{n_k},x)=\Gamma(\mu,x),
\eeq
where $\Gamma(\mu_{n_k},x)$ and $\Gamma(\mu,x)$ are given in formulas \eqref{discrete transfomer} and \eqref{general transfomer},
respectively.
Moreover, it holds that  the push forwards of the measures satisfy the limit
\beq 
\lim_{k\to \infty}\Gamma(\mu_{n_k},\cdotp)_\#\mu_{n_k}=\Gamma(\mu,\cdotp)_\#\mu
\eeq
in the 1-Wasserstein topology.

Let us next consider the  prompts $(x_1,x_2,\dots,x_n)$ and 
the corresponding discrete measures $\mu_n=\sum_{i=1}^{n} \frac 1n \delta_{x_i}$.  
As seen above, then there exists at least one  sub-sequence $\mu_{n_k}$  
that
converge to a general probability measure $\mu\in {\mathcal P}(\Omega)$,
that is a sum of a point measure,  a  continuous measure, and a measure that are singular with respect to the standard measure of $\RR^d$, see formula \eqref{3 measures}. 
Thus, to understand properties of transformers  it is useful to consider mappings between general probability measures 
that have the same properties of the  transformers.

\section{Proofs for technical parts}

\subsection{Proof of Lemma~\ref{lem:dense}}
\label{app:dense}
\begin{proof}
Let $\mu \in \mathcal M^+(\Omega)$ and let $\epsilon \in (0,1)$. Since $\mathcal M^+_{fin}(\Omega)$ is dense in $\mathcal M^+(\Omega)$ in $1$-Wasserstein topology, there is $\mu_k \in \mathcal M^+_{fin,dif}(\Omega)$ with 
$
\mu_k = \sum_{i=1}^{n} a_i \delta_{x_i}, a_i>0
$ 
such that 
$$
W_1(\mu_k, \mu) \leq \epsilon.
$$
We can choose 
$\tilde a_1,\dots,\tilde a_n >0$ such that $|\tilde a_j-a_j|<\epsilon/n$ and, for any 
non-empty disjoint subsets $J,K\subset\{1,\dots,n\}$, it holds that 
\[
\sum_{i\in J}\tilde a_i \neq \sum_{i\in K}\tilde a_i.
\]
Indeed, setting $\tilde a_i = a_i+\eta_i$, the equality
\[
\sum_{i\in J}\tilde a_i = \sum_{i\in K}\tilde a_i,
\]
is equivalent to
\[
\sum_{i\in J}\eta_i - \sum_{i\in K}\eta_i 
= \underbrace{\sum_{i\in K}a_i - \sum_{i\in J}a_i}_{=: \Delta_{J,K}}.
\]
Since the set $\cup_{J,K}\{ \eta \in \RR^n : \sum_{i\in J}\eta_i - \sum_{i\in K}\eta_i =  \Delta_{J,K} \}$ of affine hyperplanes are measure-zero set, we can choose small $|\eta_i| < \epsilon/n$ so that 
\[
\eta \notin \bigcup_{J,K}\left\{ \eta \in \RR^n : \sum_{i\in J}\eta_i - \sum_{i\in K}\eta_i =  \Delta_{J,K} \right\}.
\]
Thus, defining by $\tilde{\mu}_n = \sum_{i=1}^{n} \tilde{a}_i \delta_{x_i} \in \mathcal M^+_{fin,dif}(\Omega)$, we see that 
\[
W_1(\mu_k, \tilde{\mu}_n) < \epsilon.
\]
We have proved Lemma~\ref{lem:dense}.
\end{proof}

\subsection{Proof of Lemma~\ref{lem: general ai}}
\label{app:lem: general ai}

\begin{proof}
When  $\tilde x_j\in \Omega,$ $\tilde a_j>0$, $j=1,2,\dots,\tilde n$ are of the form
$\tilde a_j=cm_j$ where $c>0$ and $m_j\in \mathbb Z_+$, we can can write the measure
\beq
\mu = \sum_{j=1}^{\tilde n} \tilde  a_j \delta_{\tilde x_j}
\eeq
in  the form
\beq
\mu = \sum_{i=1}^{n} \frac {\mu(\Omega)}{n}\delta_{x_i},
\eeq
where $n=\sum_{j=1}^{\tilde n}m_k$ and $x_1,x_2\dots,x_n$ is a sequence
where each point $\tilde x_j$ appears $m_j$ times. As $f$ is a support preserving map, there are $y_i\in\R^{d'}$,
$i=1,2,\dots,n$, such that
\beq
f(\mu) &=&
\sum_{i=1}^{n} \frac {\mu(\Omega)}{n}\delta_{y_i}.
\eeq
Moreover, $y_{i_1}=y_{i_2}$ if $x_{i_1}=x_{i_2}$. Hence, we can write $f(\mu)$ in the form
\beq
f(\mu) &=&
\sum_{j=1}^{\tilde n} \bigg(\sum_{i:\ x_i=\tilde x_j} \frac {\mu(\Omega)}{n}\bigg) \delta_{y_i}
\nonumber\\
&=&
\sum_{j=1}^{\tilde n}  \frac {cm_j}{n} \delta_{\tilde y_j}
\eeq
where $c=\mu(\Omega)$ and the set $\{\tilde y_1,\dots,\tilde y_{\tilde n}\}$ contains the same points as  the set $\{y_1,\dots,y_n\}$. Below, we denote $\tilde X=(\tilde x_1,\dots,\tilde x_{\tilde n})$ and  $Y_j(\tilde X,\mu):=\tilde y_j.$ The above shows that the claim is valid when the $a_j$ are of the form $a_j = cm_j$, $m_j\in \mathbb Z_+$.

We now consider general values, $a_i>0$, and points, $x_i\in \Omega$, $i=1,\dots,n$ and let $c=\sum_{i=1}^{n}a_i$. We let $N_k, m_{k,i}\in\mathbb Z_+$,  $i=1,\dots,n$, $k\in \mathbb Z_+$ be such that
\ba
\lim_{k\to \infty} \frac { m_{k,i}}{N_k}=a_i\quad \hbox{for all }i=1,2,\dots,n.
\ea
Also, we let $c_k=c/N_k$ and
\beq\label{mu k measure}
\mu_k = \sum_{i=1}^{n} c_{k}m_{k,i}\delta_{x_i}.
\eeq
We write $X=(x_1,\dots, x_{n})$. Then, as we have already shown that the claim is valid for measures $\mu_k$ of the form
\eqref{mu k measure}, we can write $f(\mu_k)$ as
\beq
f(\mu_k)= \sum_{i=1}^{n} c_{k}m_{k,i}\delta_{Y_i(X,\mu_k)}= \sum_{i=1}^{n} 
\frac { m_{k,i}}{N_k}\delta_{Y_i(X,\mu_k)}\in \mathcal M^+_{fin,(n)}(\Omega).
\eeq
As $f$ is a continuous map in the 1-Wasserstein topology and  the  set $\mathcal M^+_{fin,(n)}(\Omega)$ is a closed subset of $\mathcal M^+(\Omega)$
in the same topology and $\mathcal M^+(\Omega)$ is a complete space,
we conclude that there exists a limit
\beq
f(\mu)=\lim_{k\to \infty}f(\mu_k) \in  \mathcal M^+_{fin,(n)}(\Omega).
\eeq
Thus we can write $f(\mu)$ in the form,
\beq
f(\mu)= \sum_{j=1}^{n'} b_j\delta_{z_j},
\eeq
with some  $n'\le n$, $z_j \in \Omega$ and $b_j>0$. We choose
\ba
& &\rho=\min\{|z_{j=1}-z_{j_2}|:\  j_1,j_2\in [n'],\ j_1\not =j_2\}>0
\ea
and let $A=\min_j a_j>0.$ Moreover, as $f(\mu_k)\to f(\mu)$ in the 1-Wasserstein metric as $k\to \infty$,  we observe that for each $k$ there is a partition of the set $\{1,2,\dots,n\}$ to a union of disjoint sets, $I_{1,k},\dots,I_{n',k}$, such that when $k$ is sufficiently large,
\ba
 \sum_{j=1}^{n'} \frac {A}{4} \min_{i\in I_{j,k}}\hbox{dist}(Y_i(X,\mu_k),z_j)
+\frac {1}{4}\sum_{j=1}^{n'} \bigg|\bigg(\sum_{i\in  I_{j,k}}\frac { m_{k,i}}{N_k}\bigg)-b_j\bigg|
\le W_1(f(\mu_k),f(\mu)).
\ea
As $W_1(f(\mu_k),f(\mu))\to 0$ as $k\to \infty$, we find that by replacing  $\mu_{k}$  by its suitable subsequence, 
we can assume that the partition $I_{1,k},\dots,I_{n',k}$  is equal to a partition $I_{1},\dots,I_{n'}$ that is independent of $k$, and 
\beq
Y_i(X,\mu_k)\to z_i,\quad \hbox{as }k\to \infty.
\eeq
Moreover,
\beq
\sum_{i\in  I_{j}} a_i = \sum_{i\in  I_{j}}\lim_{k\to \infty}\frac { m_{k,i}}{N_k} = b_j
\eeq
for $j=1,2,\dots,n'$. Then $b_j = \sum_{i\in  I_{j}} a_i$, and
\beq
f(\mu)= \sum_{j=1}^{n'} \bigg(\sum_{i\in  I_{j}} a_i\bigg) \delta_{z_j}= \sum_{i=1}^{n} a_i\delta_{y_i},
\eeq
where $y_1,\dots,y_n$ is a sequence of the points $z_1,\dots,z_{n'}$ where each $z_j$ appears 
$|I_j|$
times. This proves the claim for general weights $a_i>0$.
\end{proof}

\subsection{Proof of Lemma~\ref{lem: extension of derivative}}
\label{app:lem: extension of derivative}

\begin{proof}
Let $f_{G} :\ \mathcal{M}^+(\Omega)\to \mathcal{M}^+(\RR^{d'})$ be the map, $f_G(\mu)=G(\mu)_\# \mu$, with a continuous map $G:\mathcal{M}^+(\Omega) \times \Omega \to \R^{d'}$. We assume that $\psi_k, \psi \in C^1_0(\R^{d'})$ and $\mu_k, \mu \in \mathcal M_+(\Omega)$, $k=1,2,\dots$ are sequences with 
$$
\psi_k \hbox{ is constant in an open neighborhood of }\supp(f(\mu_k))
$$
and
$$
\lim_{k\to\infty}W_1(\mu_k,\mu)=0,\quad
\lim_{k\to\infty} \|\psi_k -\psi\|_{L^\infty(\R^{d'})}=0.
$$ 
Let 
$$
\mu^\e_{k,x}:=\mu_k + \e \delta_x.
$$
Then, by the simple computation,
$$
\bra f_{G}(\mu^\e_{k,x}), \psi\ket
=
\int_{\R^{d}} \psi_k(G(\mu^\e_{k,x},y))d \mu_k(y) + \e \psi_k(G(\mu^\e_{k,x},x)).
$$
As the set $\Omega \subset \R^{d}$ is compact, 
$$
\mathcal M^+_\rho(\Omega) :=\{\mu \in \mathcal M^+(\Omega):\ \mu(\Omega)\le \rho\}
$$ is also compact by the Prokhorov’s theorem. Then, the map $G:\mathcal M^+_\rho(\Omega)\times \Omega\to \RR^{d'}$
is uniformly continuous. As $G(\mu^\e_{k,x},\cdot)\to G(\mu_k,\cdot)$ uniformly in $\Omega \subset \RR^{d}$ as $\epsilon \to 0$,
we see that  
$$
\sup_{y \in \text{supp}(\mu_k)} |G(\mu^\e_{k,x},y)-G(\mu_{k},y)|\to 0\quad\hbox{as }\epsilon\to 0.
$$
Thus, we find that for sufficiently small $ \epsilon \in (0, 1)$
$$
\psi_k(G(\mu^\e_{k,x},y))=\psi_k(G(\mu_k,y))
$$
for all $y \in \supp(\mu_k)$, and
$$
\bra f_{G}(\mu^\e_{k,x}), \psi_k\ket
= \int_{\R^d} \psi_k(G(\mu_k,y))d \mu_k(y)+\e \psi_k(G(\mu_k,x)).
\nonumber
$$
This implies that  
$$
\overline {\mathcal D}_{f_G}(\mu_k,x,\psi_k) = \lim_{\e \to + 0} \frac{\bra f_{G}(\mu^\e_{k,x}), \psi_k\ket - \bra f_{G}(\mu_{k}), \psi_k\ket }{\epsilon}
= \psi_k(G(\mu_k,y)).
$$
Upon taking the limit $k \to \infty$, we obtain
$$
\overline {\mathcal D}_{f_G}(\mu,x,\psi)
=\psi(G(\mu,y)).
$$
Next, we prove that the map $(\mu,y,\psi)\to \overline {\mathcal D}_{f_G}(\mu,y,\psi)$ is uniformly continuous.
Let $\e_1>0$. By the uniform continuity of $G$, there is a $\delta_1=\delta_1(\e_1)\in (0,\e_1)$ such that if $W_1(\mu_1,\mu_2)<\delta_1(\e_1)$ and $|y_1-y_2|<\delta_1(\e_1)$
then $|G(\mu_1,y_1)-G(\mu_2,y_2)|<\e_1/2$. Let $(\mu_1,y_1,\psi_1),(\mu_2,y_2,\psi_2) \in \mathcal X$ so that $\hbox{Lip}(\psi_j)\le \eta$ for $j=1,2$. Also, assume that $\|\psi_1-\psi_2\|_{L^\infty}<\delta_1(\e_1)$.
We then see that
\beq 
\hspace*{-0.75cm}
|\overline {\mathcal D}_{f_G}(\mu_1,y_1,\psi_1)-\overline {\mathcal D}_{f_G}(\mu_2,y_2,\psi_2)|
&=&|\psi_1(G(\mu_1,y_1))-\psi_2(G(\mu_2,y_2))|
\nonumber\\
&\le &|\psi_1(G(\mu_1,y_1))-\psi_1(G(\mu_2,y_2))|
\nonumber\\
& &\qquad +|\psi_1(G(\mu_2,y_2))-\psi_2(G(\mu_2,y_2))|
\nonumber\\
&\le &\hbox{Lip}(\psi_1)|G(\mu_1,y_1)-G(\mu_2,y_2)|
+\|\psi_1-\psi_2\|_{L^\infty}
\nonumber\\
&\le &\hbox{Lip}(\psi_1)\e_1+\delta_1(\e_1)
\nonumber\\
&\le &(\eta+1)\e_1.
\nonumber
\eeq
We observe that if $D_{\mathcal X}((\mu_1,y_1,\psi_1),(\mu_2,y_2,\psi_2))<\delta_1(\e_1)$ then $W_1(\mu_1,\mu_2)<\delta_1(\e_1)$ and $|y_1-y_2|<\delta_1(\e_1)$, and moreover that $\|\psi_1-\psi_2\|_{L^\infty}<\delta_1(\e_1)$.
We conclude that $(\mu,y,\psi) \to \overline {\mathcal D}_{f_G}(\mu,y,\psi)$ is uniformly continuous.
\end{proof}

\subsection{Proof of Lemma~\ref{Lemma yk continuous}}
\label{app-lemma yk}

\begin{proof}
The assumptions imply that
$W_1(\mu_p,\mu_0) \to 0$ as $p \to \infty$. Hence, as $f$ is continuous in the 1-Wasserstein distance, it holds that $W_1(f(\mu_p),f(\mu_0)) \to 0$ as $p \to \infty$.
If the claim is not valid, there are $k$ and $({\boldsymbol x}^p,{\boldsymbol a}^p)$ such that $({\boldsymbol x}^p,{\boldsymbol a}^p) \to ({\boldsymbol x}^0,{\boldsymbol a}^0)$ as $p \to \infty$ and  $\mu_0 = \sum_{i=1}^{n} a^0_i \delta_{x^0_i} \in \mathcal M_{fin,dif,(n)}(\Omega)$, and the sequence
$y_k({\boldsymbol x}^p;{\boldsymbol a}^p)$, $p\in {\mathbb Z}_+$, does not converge to
the value $y_k({\boldsymbol x}^0;{\boldsymbol a}^0)$ as $p\to \infty$.
%
%
By replacing $({\boldsymbol x}^p;{\boldsymbol a}^p)$ 
  by its suitable subsequence, we can assume that there exists 
  $z\in \R^{d'}$ such that
\beq\label{lim formula 1}
\lim_{p \to \infty}y_k({\boldsymbol x}^p;{\boldsymbol a}^p) =z\not =y_k({\boldsymbol x}^0;{\boldsymbol a}^0).
\eeq
As all $a_i^0$ are strictly positive and $a_i^p \to a_i^0$, there are $b>0$ and $p_0$ such that we have $a_i^p>b$ for all $p>p_0$ and $i$. As $f(\mu_p)=\sum_{k=1}^n a^p_k\delta_{y_k({\boldsymbol x}^p;{\boldsymbol a}^p)} \to f(\mu_0)$ in 1-Wasserstein distance, we see  that
$$
\lim_{p \to \infty} \sup_{y \in \text{supp}(f(\mu_p))}\hbox{dist}(y,\supp(f(\mu_0))=0.
$$
This and \eqref{lim formula 1} imply that there is $k_0\not= k$ such that 
  \beq\label{lim formula 2}
\lim_{p \to \infty}y_k({\boldsymbol x}^p;{\boldsymbol a}^p)=y_{k_0}({\boldsymbol x}^0;{\boldsymbol a}^0) \not =y_k({\boldsymbol x}^0;{\boldsymbol a}^0).
\eeq
Then, as \eqref{lim formula 2} holds,
we find that
$$
\lim_{p \to \infty}
|y_k({\boldsymbol x}^p;{\boldsymbol a}^p) -y_k({\boldsymbol x}^0;{\boldsymbol a}^0)| \ge  
\min\{ |y-y'|:\ y,y'\in \supp(f(\mu_0)),\ y \not =y'\} >0
$$
and that the measures $\mu_p = \sum_{i=1}^{n} a^p_i \delta_{x^p_i}$ and $\mu_0= \sum_{i=1}^{n} a^0_i \delta_{x^0_i}$ and their images under $f$, that is,
$$
f(\mu_p) = \sum_{i=1}^{n} a^p_i \delta_{y_i({\boldsymbol x}^p;{\boldsymbol a}^p)}\quad\hbox{and}\quad 
f(\mu_0)= \sum_{i=1}^{n} a^0_i \delta_{y_i({\boldsymbol x}^0;{\boldsymbol a}^0)},
$$
satisfy the inequality
\ba
\lim_{p \to \infty} 
W_1(f(\mu_p),f(\mu_0))
%
\ge \hbox{gap}(\mu_0)\min\{ |y-y'|:\ y,y'\in \supp(f(\mu_0)),\ y\not =y'\}>0,
\ea
where $\hbox{gap}(\mu_0)$ is defined in \eqref{gap finite with difference}. This is not possible in view of the 1-Wasserstein continuity of $f$. Thus, the claim follows.
\end{proof}

\subsection{Proof of Corollary~\ref{cor:approximation-trans}}
\label{app:approximation-trans}
\begin{proof}
Using Theorem~\ref{main thm}, there is an in-context map $G$ such that $f(\mu) = G(\mu)_{\sharp}\mu$. Since the map $G$ is continuous, by using \cite[Theorem 1]{furuya2024transformers}, for any $\epsilon \in (0,1)$, there is a measure-theoretic transformer-style in-context mapping $G_{\mathrm{tran}} := F_{\xi_L} \diamond \Gamma_{\theta_L} \diamond \ldots \diamond F_{\xi_1} \diamond \Gamma_{\theta_1}$ such that 
\[
\sup_{(\mu,x) \in \Pp(\Omega) \times \Omega} |G_{\mathrm{tran}}(\mu, x) - G(\mu, x)|\leq \epsilon,
\]
which implies that, by the duality theorem of Kantorovich and Rubinstein, 
\begin{align*}
 W_1(f_{\mathrm{tran}}(\mu), f(\mu)) 
 &
 =\sup_{\mathrm{Lip}(\varphi)\leq 1} \int \varphi(G_{\mathrm{tran}}(\mu,x)) - \varphi(G(\mu,x)) d\mu(x)
 \\
 &
 \leq \int |G_{\mathrm{tran}}(\mu,x) - G(\mu,x)| d\mu(x) \leq \epsilon.
\end{align*}
We have proved Corollary~\ref{cor:approximation-trans}.
\end{proof}

\subsection{Proof of Proposition~\ref{prop-f-T-hold-B}}
\label{app-prop-f-T-hold-B}

\begin{proof}
By \cite[Theorem 2.3]{piccoli2015control}, there exists $G_t : \Pp(\Omega) \times \Omega \to \R^{d}$ such that  
\[
\mu_t = G_t(\mu_0)_\sharp \mu_0, 
\]
where $G_t$ is defined by the unique solution of the following Cauchy problem
\begin{equation}
\label{eq:G-flow}
\partial_t G_t(\mu_0, x) = V[\mu(t)](t, G_t(\mu_0, x)), \quad G_0(\mu_0, x) = x.
\end{equation}
This is a push forward, thus the solution map, $f$, satisfies (B1). Moreover, if the map $(\mu, x) \mapsto G_T(\mu, x)$ is Lipschitz continuous, by Lemma~\ref{lem: extension of derivative} the map $(\mu,x,\psi) \mapsto \overline {\mathcal D}_{f_T}(\mu,x,\psi)$ is Lipschitz continuous with respect to the metric $D_{\mathcal X}$. This implies (B2). In what follows, we will prove that the map $(\mu, x) \mapsto G_T(\mu, x)$ is Lipschitz continuous. 

\bigskip

We estimate, for $\mu_0, \nu_0 \in \Pp(\Omega)$ and $x,y \in \Omega$,
\begin{align*}
\frac{d}{dt} \| &G_t(\mu_0, x) - G_t(\nu_0, y)  \|_2
\\
&
\leq 
\Big\| \frac{d}{dt} G_t(\mu_0, x) - \frac{d}{dt} G_t(\nu_0, y)  \Big\|_2
= 
\| V[\mu(t)](t, G_t(\mu_0, x)) - V[\nu(t)](t, G_t(\nu_0, x))  \|_2
\\
&
\leq 
\| V[\mu(t)](t, G_t(\mu_0, x)) - V[\mu(t)](t, G_t(\nu_0, x))  \|_2
\\
&
\quad\quad\quad\quad
+
\| V[\mu(t)](t, G_t(\nu_0, x)) - V[\nu(t)](t, G_t(\nu_0, x))  \|_2
\\
&
\leq 
L(t) \| G_t(\mu_0, x) - G_t(\nu_0, x) \|_2
+
K(t) W_1(\mu(t), \nu(t))
\\
&
\leq 
L(t) \| G_t(\mu_0, x) - G_t(\nu_0, x) \|_2
+
K(t) e^{C_T t} W_1(\mu_0, \nu_0),
\end{align*}
for some $K, L \in L^{\infty}_{loc}(\R)$ and some $C_T > 0$. Here, we have used the assumption of Lipschitz continuity required in \cite[Theorem 2.3]{piccoli2015control}, and the stability estimate \cite[(2.3)]{piccoli2015control}. By Gronwall's inequality, we find that 
\begin{align*}
&
\| G_t(\mu_0, x) - G_t(\nu_0, x) \|_2
\\
&
\leq e^{A(t)} \underbrace{\|G_0(\mu_0, x) - G_0(\nu_0, x)\|_2}_{=\|x-y\|_2} 
+ \left( \int_0^t e^{A(t) - A(s)} K(s) e^{C_T s} ds\right)  W_1(\mu_0, \nu_0),
\end{align*}
where $A(t) = \int_0^{t} L(s) ds$. Thus, substituting $t=T$, there exists $C'_T >0$ such that 
\[
\| G_T(\mu_0, x) - G_T(\nu_0, x) \|_2
\leq C'_T (W_1(\mu_0, \nu_0) + \|x-y\|_2 ),
\]
which implies that the map $(\mu, x) \mapsto G_T(\mu, x)$ is Lipschitz continuous.
\end{proof}

\section{Proof of Theorem \ref{main thm}}

\subsection{Part 1 : (A1)-(A2) imply (B1)-(B2)}
\label{app:sec:proof-1}

Assume that (A1) and (A2) hold true. Then, let $f_{G} :\ \mathcal{M}^+(\Omega)\to \mathcal{M}^+(\RR^{d'})$ be the map, $f_G(\mu)=G(\mu)_\# \mu$, with $G:\mathcal{M}^+(\Omega) \times \Omega \to \R^{d'}$. It is straightforward to prove (B1) and, hence, we will focus on proving that (B2) holds.
We assume that $\psi_k, \psi \in C^1_0(\R^{d'})$ and $\mu_k, \mu \in \mathcal M_+(\Omega)$, $k=1,2,\dots$ are sequences with 
$$
\psi_k \hbox{ is constant in an open neighborhood of }\supp(f(\mu_k))
$$
and
$$
\lim_{k\to\infty}W_1(\mu_k,\mu)=0,\quad
\lim_{k\to\infty} \|\psi_k-\psi\|_{L^\infty(\R^{d'})}=0.
$$ 
Let 
$$
\mu^\e_{k,x}:=\mu_k + \e \delta_x.
$$
Then 
\beq
\bra f_{G}(\mu^\e_{k,x}), \psi\ket
&=& \bra (G(\mu^\e_{k,x}))_\# \mu^\e_{k,x}, \psi_k\ket
\nonumber\\[0.25cm]
&=& \bra \mu^\e_{k,x}, \psi_k \circ G(\mu^\e_{k,x},\cdot)\ket
\nonumber\\
&=&  \int_{\R^{d}} \psi_k(G(\mu^\e_{k,x},y))d \mu^\e_{k,x}(y)
\nonumber\\
&=& 
\int_{\R^{d}} \psi_k(G(\mu^\e_{k,x},y))d \mu_k(y) + \e \psi_k(G(\mu^\e_{k,x},x)).
\nonumber
\eeq
By (A2), 
$
(\mu,x)\to G(\mu,x)
$ is a continuous function. 
As the set $\Omega\subset \RR^{d}$
is compact, for any $\rho>0$ the set 
$$
\mathcal M^+_\rho(\Omega) :=\{\mu \in \mathcal M^+(\Omega):\ \mu(\Omega)\le \rho\},
$$
consists of measures that are uniformly bounded and supported
in the same compact set $\Omega$. Therefore, the set $\mathcal M^+_\rho(\Omega)$
is tight (see \cite[Chapter 1, Section 1]{Billingsley}). 
The set $\mathcal M^+_\rho(\Omega)$ is also closed in the weak${}^*$ topology
of measures as it is closed in the 1-Wasserstein topology.
By  Prokhorov's theorem (see \cite[Chapter 5, Theorem 5.1]{Billingsley}), 
the set  $\mathcal M^+_\rho(\Omega)$ is compact in the 1-Wasserstein topology.
As a continuous map defined in a compact metric space is uniformly
continuous, the map $G:\mathcal M^+_\rho(\Omega)\times \Omega\to \RR^{d'}$
is uniformly continuous.
Moreover, by our assumptions, the derivative of $\psi_k$ is zero in some neighborhood, $V \subset \RR^{d'}$, of the finite set $\{G(\mu_k,x):\ x\in \supp(\mu_k)\}$. As $G(\mu^\e_{k,x},\cdot)\to G(\mu_k,\cdot)$ uniformly in $\Omega \subset \RR^{d}$ as $\epsilon \to 0$,
we see that  
\beq
\sup_{y \in \text{supp}(\mu_k)} |G(\mu^\e_{k,x},y)-G(\mu_k,y)|\to 0\quad\hbox{as }\epsilon\to 0.
\eeq
Thus, we find that for sufficiently small $ \epsilon \in (0, 1)$, for all $y \in \supp(\mu_k)$ the point $G(\mu^\e_{k,x},y)$ belongs to the set $V$, and, hence, 
$$
\psi_k(G(\mu^\e_{k,x},y))=\psi_k(G(\mu_k,y))
$$
for all $y \in \supp(\mu_k)$, and
\beq
\bra f_{G}(\mu^\e_{k,x}), \psi_k\ket
&=& \int_{\R^d} \psi_k(G(\mu_k,y))d \mu_k(y)+\e \psi_k(G(\mu_k,x)).
\nonumber
\eeq
This implies that  
\beq\label{step 1}
\frac d{d\e}\bigg|_{\e=0} \bra f_{G}(\mu^\e_{k,x}), \psi_k\ket
=0 + \psi_k(G(\mu_k,y))=\psi_k(G(\mu_k,y)).
\eeq
Thus,
$$
\overline {\mathcal D}_{f_G}(\mu_k,x,\psi_k)
=\psi_k(G(\mu_k,x)).
$$
We see that 
$$
\lim_{k \to\infty} |\psi_k(G(\mu_k,x))-\psi_k(G(\mu_k,x))|
\leq \lim_{n \to\infty} \|\psi_k -\psi\|_{L^\infty(\R^{d'})}=0,
$$
and as $\mu \to G(\mu,x)$ is continuous, we have
$$
\lim_{k\to\infty} \psi(G(\mu_k,x))=\psi(G(\mu,x)).
$$
But then
$$
\lim_{k\to\infty} \psi_k(G(\mu_k,x))=\psi(G(\mu,x)).
$$
As the above holds for all $\mu_k$ and $\psi_k$ that converge to $\mu$ and $\psi$ in the way stated in  Definition~\ref{def:support-preserving-maps}, we conclude that the regular part of the derivative $\overline {\mathcal D}_{f_G}(\mu,y,\psi)$ exists and is equal to
\beq
\overline {\mathcal D}_{f_G}(\mu,x,\psi)
=\psi(G(\mu,y)).
\label{psi G formula}
\eeq
This proves the existence of the regular part of the derivative. Moreover, these arguments prove Lemma~\ref{lem: extension of derivative}.

Next, we prove that the map $(\mu,y,\psi)\to \overline {\mathcal D}_{f_G}(\mu,y,\psi)$ is uniformly continuous when (A1) and (A2) are valid.
To this end, let $\e_1>0$. By (A2), there is a $\delta_1=\delta_1(\e_1)\in (0,\e_1)$ such that if $W_1(\mu_1,\mu_2)<\delta_1(\e_1)$ and $|y_1-y_2|<\delta_1(\e_1)$
then $|G(\mu_1,y_1)-G(\mu_2,y_2)|<\e_1/2$. Let $(\mu_1,y_1,\psi_1),(\mu_2,y_2,\psi_2) \in \mathcal X$ so that $\hbox{Lip}(\psi_j)\le \eta$ for $j=1,2$. Also, assume that $\|\psi_1-\psi_2\|_{L^\infty}<\delta_1(\e_1)$.
Equation \eqref{psi G formula} implies that
\beq 
\hspace*{-0.75cm}
|\overline {\mathcal D}_{f_G}(\mu_1,y_1,\psi_1)-\overline {\mathcal D}_{f_G}(\mu_2,y_2,\psi_2)|
&=&|\psi_1(G(\mu_1,y_1))-\psi_2(G(\mu_2,y_2))|
\nonumber\\
&\le &|\psi_1(G(\mu_1,y_1))-\psi_1(G(\mu_2,y_2))|
\nonumber\\
& &\qquad +|\psi_1(G(\mu_2,y_2))-\psi_2(G(\mu_2,y_2))|
\nonumber\\
&\le &\hbox{Lip}(\psi_1)|G(\mu_1,y_1)-G(\mu_2,y_2)|
+\|\psi_1-\psi_2\|_{L^\infty}
\nonumber\\
&\le &\hbox{Lip}(\psi_1)\e_1+\delta_1(\e_1)
\nonumber\\
&\le &(\eta+1)\e_1.
\nonumber
\eeq
We observe that if $D_{\mathcal X}((\mu_1,y_1,\psi_1),(\mu_2,y_2,\psi_2))<\delta_1(\e_1)$ then $W_1(\mu_1,\mu_2)<\delta_1(\e_1)$ and $|y_1-y_2|<\delta_1(\e_1)$, and moreover that $\|\psi_1-\psi_2\|_{L^\infty}<\delta_1(\e_1)$.
We conclude that $(\mu,y,\psi) \to \overline {\mathcal D}_{f_G}(\mu,y,\psi)$ is uniformly continuous. This proves (B2).

We continue with proving one direction of the final statement of the theorem. Let $G(\mu,y)$ be a Lipschitz map. Equation \eqref{psi G formula} implies that for all $(\mu_1,y_1,\psi_1), (\mu_2,y_2,\psi_2) \in \mathcal X$, 
\beq 
|\overline {\mathcal D}_{f_G}(\mu_1,y_1,\psi_1)-\overline {\mathcal D}_{f_G}(\mu_2,y_2,\psi_2)|
&=&|\psi_1(G(\mu_1,y_1))-\psi_2(G(\mu_2,y_2))|
\nonumber\\
&\le &|\psi_1(G(\mu_1,y_1))-\psi_1(G(\mu_2,y_2))|
\nonumber\\
& &\qquad +|\psi_1(G(\mu_2,y_2))-\psi_2(G(\mu_2,y_2))|
\nonumber\\
&\le &\hbox{Lip}(\psi_1)|G(\mu_1,y_1)-G(\mu_2,y_2)|
+\|\psi_1-\psi_2\|_{L^\infty}
\nonumber\\
&\le &(\hbox{Lip}(\psi_1)\hbox{Lip}(G)+1)D_{\mathcal X}((\mu_1,y_1,\psi_1),(\mu_2,y_2,\psi_2))
\nonumber\\
&\le &(\eta \hbox{Lip}(G)+1)D_{\mathcal X}((\mu_1,y_1,\psi_1),(\mu_2,y_2,\psi_2)).
\nonumber
\eeq
Hence, $(\mu,y,\psi)\to \overline {\mathcal D}_{f_G}(\mu,y,\psi)$ is a Lipschitz map.


\subsection{Part 2 : (B1)-(B2) imply (A1)-(A2)}
\label{app:sec:proof-2}

Assume that (B1) and (B2) hold true. Since $f$ is a support-preserving map, $f :\ \mathcal M^+(\Omega)\to \mathcal M^+(\RR^{d'})$, there are (possibly non-continuous) functions,
$$
    y_i:\Omega^n \times (0,\infty)^n \to \RR^{d'} ,\
    ({\boldsymbol{x}},{\boldsymbol a}) \to y_i({\boldsymbol x};{\boldsymbol a}),\quad
    i=1,2,\dots,n,
$$
where
$$
\boldsymbol{x} = (x_1, \dots, x_n)
\quad \text{and} \quad
\boldsymbol{a} = (a_1, \dots, a_n),
$$
such that the following holds: Let
$$
    \mu = \sum_{i=1}^{n} a_i \delta_{x_i} \in \mathcal M_{fin}(\Omega),\quad a_i>0;
$$
then the functions $y_i({\boldsymbol x};{\boldsymbol a})$ satisfy
\[
    f(\mu) = \sum_{i=1}^{n} a_i \delta_{y_i({\boldsymbol x};{\boldsymbol a})}.
\]
When $\mu \in \mathcal M^+_{fin,dif,(n)}(\Omega)$ (which is a refinement of the property that if $j\not =i$ then $a_j\not =a_i$),
the functions $({\boldsymbol x};{\boldsymbol a}) \to y_i({\boldsymbol x};{\boldsymbol a})$ must have the following property,
\beq \label{same xs}
\hbox{if $x_j=x_i$ then }y_j({\boldsymbol x};{\boldsymbol a}) = y_i({\boldsymbol x};{\boldsymbol a}).
\eeq 

Let $\mu \in \mathcal M^+(\Omega)$ and $x \in \Omega$, and $\alpha \in C^\infty_0(\R^d)$ be a cutoff function
such that  $\alpha(x)=1$ for all  $x \in \Omega$ and $\hbox{Lip}(\alpha(x)\cdot x)\le \eta$. We define
\begin{equation}
\label{eq:def-G}
G(\mu,x):= 
\begin{pmatrix}
\overline {\mathcal D}_f(\mu, x, \alpha \pi_1)
\\
\vdots
\\
\overline {\mathcal D}_f(\mu, x, \alpha  \pi_{d'})
\end{pmatrix},
\end{equation}
where $\pi_\ell :\mathbb{R}^d \to \mathbb{R}$ is the projection $\pi_\ell(x)=x_\ell$ onto the $\ell$-th component. By (B2), the map $(\mu, x) \mapsto G(\mu,x)$ is continuous, which proves (A2). In what follows, we will prove (A1).

\paragraph{The case when $\mu \in \mathcal M^+_{fin,dif,(n)}(\Omega)$.} We let
$$
\mu=\sum_{i=1}^{n}a_i \delta_{x_i} \in \mathcal M^+_{fin,dif,(n)}(\Omega)
$$
and
$$
f(\mu) =  \sum_{i=1}^{n} a_i \delta_{y_i({\boldsymbol x};{\boldsymbol a})}. 
$$
We define the measures, 
$$
\mu^\epsilon_x = \epsilon\delta_x + \sum_{i=1}^{n} a_i \delta_{x_i}.
$$
We observe that when $\epsilon>0$ is small enough, it holds that $\mu^\epsilon_x  \in \mathcal M^+_{fin,dif,(n)}(\Omega)$ if $x\in \{x_1,\dots,x_n\}$, or  $\mu^\epsilon_x  \in \mathcal M^+_{fin,dif,(n+1)}(\Omega)$ if $x \not\in \{x_1,\dots,x_n\}$.

\medskip

With the notation, $({\boldsymbol x},x) = (x_1,\dots,x_n,x)$, $({\boldsymbol a},\epsilon) = (a_1,\dots,a_n,\epsilon)$ and sometimes indicating the number, $n$ say, of variables in the function $y_i$ as $y^{(n)}_i$, we find that
\beq\label{f mu eps}
    f(\mu^\epsilon_x) = \epsilon \delta_{y^{(n+1)}_{n+1}({\boldsymbol x},x;{\boldsymbol a},\epsilon)} + \sum_{i=1}^{n} a_i \delta_{y^{(n+1)}_i({\boldsymbol x},x;{\boldsymbol a},\epsilon)}.
\eeq
We consider the case when $x=x_j$. Then,   
\beq\label{recombination1}
\bigg(\sum_{i=1}^n a_i\delta_{x_i}+\epsilon \delta_x\bigg)\bigg|_{x=x_j}
= \sum_{i\in \{1,\dots,n\}\setminus \{j\} } a_i \delta_{x_i}
+ (a_j+\epsilon ) \delta_{x_j} \in \mathcal M^+_{fin,(n)}(\RR).
\eeq
Thus, when we write $x=x_{n+1}=x_j$, it holds that
\beq\label{recombination2}
y_{n+1}^{(n+1)}({\boldsymbol x},x;{\boldsymbol a},\epsilon)\bigg|_{x=x_j}=
y_{n+1}^{(n+1)}({\boldsymbol x},x_{n+1};{\boldsymbol a},\epsilon)\bigg|_{x_{n+1}=x_j}=y_{j}^{(n)}({\boldsymbol x};{\boldsymbol a}+\epsilon e_j),
\eeq
where $e_j=(0,0,\dots,0,1,0,\dots,0)=(\delta_{ij})_{i=1}^n$, whence
$$
{\boldsymbol a}+\epsilon e_j=(a_1,\dots,a_{j-1},a_j+\e,a_{j+1},\dots,a_n).
$$
By Lemma~\ref{Lemma yk continuous}, and using equations \eqref{recombination1}-\eqref{recombination2}, we arrive at
\beq
\lim_{\epsilon \to 0+} y_{n+1}^{(n+1)}({\boldsymbol x},x;{\boldsymbol a},\epsilon) \bigg|_{x=x_j} = y_j^{(n)}({\boldsymbol x};{\boldsymbol a}).
\label{recombination3}
\eeq
Let $\ell \in  [d']$. We choose $\psi^{(\ell)}_k \in C^1_0(\mathbb{R}^{d'})$ such that
$$
\psi^{(\ell)}_k\quad \hbox{is constant in an open neighborhood of }\supp(f(\mu))
$$
and
$$
\hbox{Lip}(\psi^{(\ell)}_k)\le \eta \text{ together with }
 \lim_{n\to\infty} \|\psi^{(\ell)}_k -\alpha\pi_{\ell} \|_{L^\infty(\R^{d'})}=0.
$$
Thus, by using equation \eqref{recombination3}, we obtain
\beq \nonumber
\overline {\mathcal D}_f(\mu,x,\psi^{(\ell)}_k)\bigg|_{x=x_j} &=&
\overline {\mathcal D}_f(\mu,x_{n+1},\psi^{(\ell)}_k)\bigg|_{x_{n+1}=x_j}
\\ \nonumber
&=& \lim_{\epsilon \to 0+} \bra \psi^{(\ell)}_k,\delta_{y^{(n+1)}_{n+1}({\boldsymbol x},x_{n+1};{\boldsymbol a},\epsilon)} \ket \bigg|_{x_{n+1}=x_j}
\\ \nonumber
&=& \lim_{\epsilon \to 0+} \psi^{(\ell)}_k(y^{(n+1)}_{n+1}({\boldsymbol x},x_{n+1};{\boldsymbol a},\epsilon))\bigg|_{x_{n+1}=x_j}
\\ \nonumber
&=& \lim_{\epsilon \to 0+}\psi^{(\ell)}_k(y^{(n)}_{j}({\boldsymbol x};{\boldsymbol a}+\epsilon e_j))
\\[0.25cm] \nonumber
&=& \psi^{(\ell)}_k(y_j^{(n)}({\boldsymbol x};{\boldsymbol a}))
= \psi^{(\ell)}_k(y_j({\boldsymbol x};{\boldsymbol a})).
\eeq
By the definition of $\overline {\mathcal D}_f(\mu,x,\alpha\pi_\ell)$ and that $\psi^{(\ell)}_k\to \alpha\pi_\ell$ in $L^\infty(\R^{d'})$ as $n \to \infty$, we observe that for $x=x_j$, 
\ba
\pi_{\ell}(G(\mu,x_j))
&=&
\overline {\mathcal D}_f(\mu, x_j, \alpha\pi_{\ell})
= \lim_{n \to \infty }\overline {\mathcal D}_f(\mu, x_j, \psi^{(\ell)}_k)
\\
&=&
\lim_{n \to \infty} \psi^{(\ell)}_k(y_j({\boldsymbol x};{\boldsymbol a}))
= (\alpha \pi_{\ell})(y_j({\boldsymbol x};{\boldsymbol a})) = \pi_{\ell}(y_j).
\ea
This proves that, for each $j \in [n]$,
\beq\label{G-formula}
G(\mu, x_j) = y_j({\boldsymbol x};{\boldsymbol a}),
\eeq
which is equivalent to
$$
f(\mu)=(G_\mu)_\# \mu\quad \text{ for } \mu \in \mathcal M^+_{fin,dif,(n)}(\Omega).
$$

\paragraph{The case when $\mu \in \mathcal M^+(\Omega)$.}

Let $\mu$ be a (possibly not-discretely supported) measure $\mu \in \mathcal M^+(\Omega)$. By Lemma~\ref{lem:dense}, we choose the sequence $(\tilde \mu_k)_{k \in \N} \subset \mathcal M^+_{fin,dif}(\Omega)$ such that ${\tilde \mu_k} \to \mu$ as $k \to \infty$, where the limit is considered in the 1-Wasserstein topology. We have already shown that for $\tilde \mu_k \in {\mathcal M}^+_{fin,dif}(\Omega)$,
$$
f({\tilde \mu_k})=(G_{{\tilde \mu_k}})_\#({\tilde \mu_k}).
$$
Hence, taking the limit,
\beq
\label{Key identity for finite measures conseq}
f(\mu)=\lim_{m\to\infty}(G_{{\tilde \mu_k}})_\#({\tilde \mu_k}).
\eeq
That is, for all $\psi \in C^1_0(\R^{d'})$,
\beq
\label{Key identity for finite measures conseq2}
\bra \psi,f(\mu)\ket =\lim_{m\to\infty}\bra \psi,(G_{{\tilde \mu_k}})_\#({\tilde \mu_k})\ket,
\eeq
where
$$
\bra \psi,(G_{{\tilde \mu_k}})_\#({\tilde \mu_k})\ket=\bra \psi\circ G_{{\tilde \mu_k}},{\tilde \mu_k}\ket=\int_{\R^{d'}} 
\psi(G_{{\tilde \mu_k}}(x))\,d{{\tilde \mu_k}}(x).
$$
But then
\begin{multline}\label{Key identity for finite measures conseq5}
\lim_{k\to\infty}\bra \psi,(G_{{\tilde \mu_k}})_\#({\tilde \mu_k})\ket = \lim_{k\to\infty} \int_\R 
\psi(G_{{\tilde \mu_k}}(x))\,d{{\tilde \mu_k}}(x)
\\
= \lim_{k\to\infty}\int_\R 
(\psi(G({\tilde \mu_k},x))-\psi(G(\mu,x)))\,d{{\tilde \mu_k}}(x)
+\int_\R 
\psi(G(\mu,x))\,d{{\tilde \mu_k}}(x).
\end{multline}
By condition (B2), $(\mu,x) \to G(\mu,x)$ is uniformly continuous so that, using the compactness of $\Omega$,
$$
\|\psi(G({\tilde \mu_k},\cdotp))- \psi(G(\mu,\cdotp))\|_{L^\infty(\Omega)}
\le  \|\psi\|_{C^1} \|G({\tilde \mu_k},\cdotp)-G(\mu,\cdotp)\|_{L^\infty(\Omega)} \to 0 \text{ as } k \to \infty.
$$
Hence, equations \eqref{Key identity for finite measures conseq2} and \eqref{Key identity for finite measures conseq5} imply that
\beq\label{Key identity for finite measures conseq6}
\bra \psi,f(\mu)\ket 
&=&
0+ \lim_{k\to\infty}\int_\R 
\psi(G(\mu,x))\,d{{\tilde \mu_k}}(x)
\nonumber\\
&=&\lim_{k\to\infty}
\bra\psi(G(\mu,\cdotp)),{\tilde \mu_k}\ket
=\bra\psi(G(\mu,\cdotp)),\mu\ket
= \bra\psi,(G_\mu)_\#\mu\ket
\eeq
for all $\psi\in C^1_0(\R^{d'})$, $\hbox{Lip}(\psi)\le \eta$.
As both sides of \eqref{Key identity for finite measures conseq6} are linear in $\psi$, we see that  \eqref{Key identity for finite measures conseq6} holds for all $\psi\in C^1_0(\R^{d'})$ and, therefore, for all $\psi \in C_0(\R^{d'})$. Thus,
$$
f(\mu)=(G_\mu)_\# \mu\quad \text{ for } \mu \in \mathcal M^+(\Omega).
$$
This implies (A1). 

Finally, we observe that if $(\mu,y,\psi)\to \overline {\mathcal D}_{f_G}(\mu,y,\psi)$ is a Lipschitz map then $(\mu,y)\to G(\mu,y)
=(\overline {\mathcal D}_f(\mu, y, \alpha  \pi_j))_{j=1}^{d'}$ is also Lipschitz.

\section{The regular part of the derivative}
\label{app:remarks-reg}

We provide some perspectives on the regular part of the derivative introduced in the main text, in the following remarks.

\begin{remark}
A similar situation occurs when one defines the generalization of a derivative for a Lipschitz function $h :\ \RR^d \to \RR$. By the Rademacher theorem, the classical derivative of $h$ exists outside a zero-measurable set; to overcome this, one defines a weak derivative that is a function in $L^1_{loc}(\R^d)$ and is defined almost everywhere. We recall that the weak derivative is defined, in the sense of distributions, by the formula
$$
    \bra \partial_{x_i} h,\psi\ket=-\int_{\R^d} h(x)\partial_{x_i}\psi(x)dd,\quad\hbox{for }\psi\in C^\infty_0(\R^d).
$$
In the case when $h$ is a $C^1$-function, the classical derivative coincides with the weak derivative and the distributional duality coincides with the $L^2$-inner product
$$
    \bra \partial_{x_i} h,\psi\ket=\int_{\R^d} \partial_{x_i} h(x)\psi(x)dx.
$$
In this setting, the weak derivative is defined for a larger class of functions as a ``new'' generalized function.
    
Our definition of the regular part of the derivative is defined as a new generalized function using duality (or, in the weak sense). This definition is formally quite different from the classical one of Fr\'{e}chet derivative. However, as we see in Lemma~\ref{lem: extension of derivative}, for map $f_G$ defined with a smooth in-context function $G$, the regular part of derivative $\overline {\mathcal D}_{f_G}(\mu,x,\psi)$ coincides with the above defined object, $D^{reg}_{f_G}(\mu,x,\psi)$. So, we consider $D^{reg}_{f_G}(\mu,x,\psi)$ as a new object that is different from the classical Fr\'{e}chet derivative, and show that the definition of $D^{reg}_{f_G}(\mu,x,\psi)$ can be extended as a generalized regular part of the derivative, $\overline{\mathcal D}_{f}(\mu,x,\psi)$, for a class of functions $f$ for which we do not assume that the classical Fr\'{e}chet derivative is well-defined. 
\end{remark}

\begin{remark}
We point out that for any support preserving map $f :\ \mathcal{M}^+(\Omega)\to \mathcal{M}^+(\R^{d'})$, $\mu \in \mathcal{M}^+(\Omega)$, and  $\psi\in C^1_0(\RR^{d'})$, we can find a sequence of finitely supported measures, $\mu_k \in \mathcal{M}^+_{fin}(\Omega)$, that converges in the $1$-Wasserstein topology to $\mu$ as $n \to \infty$. Then also $\supp(f(\mu_k))$ is finitely supported, and we can denote $\supp(f(\mu_k))=\{y_{1,n}, y_{2,n}, \dots, y_{m_n,n}\}$. We can modify the function $\psi$ in a small neighborhood of each point, $y_{j,n}$, so that we obtain a function
$\psi_k \in  C^1_0(\RR^{d'})$ that satisfies \eqref{locally constant} and $\psi_k$ converges in the $L^\infty$ topology to $\psi$ as $n \to \infty$.  Thus, we see that for all measures $\mu \in \mathcal{M}^+(\Omega)$ and $\psi\in C^1_0(\RR^{d'})$, we can find sequences $\mu_k$ and $\psi_k$ that satisfy the conditions in Definition~\ref{genearlized regular derivative}. The existence of $\overline {\mathcal D}_{f}(\mu,x,\psi)$ thus means that for all $\mu$, $x$, and $\psi$, the limits \eqref{A limit} exist and are independent of the chosen sequences $\mu_k$ and $\psi_k$.
\end{remark}

\begin{remark}
We can come up with an alternative version of the definition for the regular part of the Fr\'{e}chet derivative of $f$ and of Definition~\ref{genearlized regular derivative}. Let us consider the triplets of measures $\mu$, points $x$ and test functions $\psi$ having the property that the test functions are locally constant in the support of $f(\mu)$. We denote this set by
\begin{multline}
\mathcal P_{lc} = \mathcal P_{lc}(f, \Omega, \rho,\eta) = \{(\mu,x,\psi) \in \mathcal M^+(\Omega) \times
\Omega \times C^1_0(\RR^{d'}) : 
\\
\mu(\Omega) \leq \rho, 
\ \  
\psi \text{ is constant in an open neighborhood of } f(\mu),
\ \   \|\psi\|_{C^1} \le \eta\}.
\nonumber
\end{multline}
Let
$$
\mathcal L_f(\mu,x,\psi) = \bra \psi,  D_\mu f(\mu)[\delta_x]) \ket,
$$
be the duality of the Fr\'{e}chet derivative $D_\mu f(\mu)[\delta_x]$ and the test function $\psi$. Then the restriction of $\mathcal L_f$ to the set $\mathcal P_{lc}$, that is,
$$
\mathcal L_f|_{\mathcal P_{lc}}:\mathcal P_{lc} \to \RR,
$$
coincides with the regular part of the derivative $\overline {\mathcal D}_{f}(\mu,x,\psi)$ of $f$. When the regular part of the derivative of $f$ exists, this map has a continuous extension to the set ${\mathcal X}$,
\[
\mathcal L_f^{ext} :\ \mathcal X \to \RR,
\]
in the topology determined by the metric, $D_{\mathcal X}$. This extension is the map $(\mu,x,\psi) \to \overline {\mathcal D}_{f}(\mu,x,\psi)$. Hence,
$\overline{\mathcal D}_f(\mu,x,\psi)$ given in Definition~\ref{genearlized regular derivative} can also be defined as the extension of the usual Fr\'{e}chet derivative from the set ${\mathcal P_{lc}}$ to the completion of this set in the appropriate topology.
\end{remark}

\section{MLPs with skip connections and composition forming an in-context map}
\label{app:transformers for discrete measures}





We consider MLPs with possible skip connections, denoted by $F_\eta$, that are given by the function
\begin{align}\label{recall FLP}
    F_\eta:\R^d\to \R^d,\quad  F_\eta = c_\eta\cdot Id_x+\sigma\circ (A^L_\eta +b^L_\eta)\circ 
    \dots\circ  \sigma\circ (A^1_\eta +b^1_\eta),
\end{align}
where $c_\eta\in \R$, $A_\eta^j\in \R^{d_j\times d_{j-1}}$ are the weight matrices, $b_\eta^j\in \R^{d_{j}}$ are
 bias vectors, $\sigma$ is an activation function, for example the sigmoid
function, and $d_0=d_{L}=d$. This defines a map for measures, $f_{F_\eta}= (F_\eta)_\#:\mathcal M^+(\R^d)\to\mathcal M^+(\R^d)$ that for 
discrete measure $\nu=\sum_{i=1}^n\frac 1n \delta_{y_i}$ is given by
\begin{align}
f_{F_\eta}(\nu)=(F_\eta)_\#\bigg(\sum_{i=1}^n\frac 1n \delta_{y_i}\bigg)=\sum_{i=1}^n\frac 1n \delta_{z_i}
\end{align} 
where 
\begin{align}
  z_i=F_\eta(y_i).
\end{align}
The composition $f_{F_\eta} \diamond f_{\Gamma_\xi}:\mathcal M^+(\Omega)\to\mathcal M^+(\R^d)$ of the maps $f_{F_\eta}$ and $f_{\Gamma_\xi}$,
see \eqref{diamond composition}, maps the discrete measure  $\mu$,
given in \eqref{mu measure1}, to 
\begin{align}\label{composition with mu measure1}
(f_{F_\eta} \diamond f_{\Gamma_\xi})\bigg(\sum_{i=1}^n\frac 1n \delta_{x_i}
\bigg)=\sum_{i=1}^n\frac 1n \delta_{z_i},\quad
z_i=F_\eta(\Gamma_\xi(\mu,x_i)).
\end{align} 
We write
$$
H_\eta(x):=F_\eta(x)-x.
$$
We note that as $\Gamma_\xi(\mu,x)=x+\hbox{Att}_\xi(\mu,x)$ and
$F_\eta(x)=x+H_\eta(x)$,
we can write
\begin{align}\label{composition}
F_\eta(\Gamma_\xi(\mu,x))=
x+{\mathcal V}(\mu,x)
\end{align}
and
\begin{align}\label{composition and map mathcal V}
& f_{F_\eta} \diamond f_{\Gamma_\xi}=Id_x+f_{\mathcal V},
\end{align}
where  
${\mathcal V}:
\mathcal M^+(\Omega)\times \R^d\to \R^d$
is the map
${\mathcal V}=\hbox{Att}_\xi+H_\eta\circ \Gamma_\xi$, that is,
\begin{align}\label{map V}
{\mathcal V}(\mu,x)=\hbox{Att}_\xi(\mu,x) + H_\eta(\Gamma_\xi(\mu,x))= \hbox{Att}_\xi(\mu,x)+H_\eta \circ (Id_x + \hbox{Att}_\xi(\mu,\cdot))(x).
\end{align} 

\section{A counterexample for the characterization of support-preserving maps using only continuity 
}
\label{app:counterexample}

In this section, we construct a map $f:{\mathcal P}(\Omega)\to {\mathcal P}(\Omega)$ that is support preserving and continuous in the 1-Wasserstein topology, but which cannot be represented as $f_G$ using a continuous in-context map, $G$. Such a map, $f$, is given in formulas \eqref{Tb map}-\eqref{Tb map3} below. 
This shows the importance of the assumptions on the derivative of the map $f$ in the main theorem.


Let us next prove Proposition \ref{prop-non existance}. We recall the statement:

\begin{proposition}\label{prop-non existance recall}
Let $d = 1$ and $\Omega = [-3,3] \subset \R$ and consider the set ${\mathcal P}(\Omega)$ endowed with the 1-Wasserstein topology. There exists a continuous, support preserving map $f : {\mathcal P}(\Omega) \to {\mathcal P}(\Omega)$ such that there does not exist a continuous map $G:{\mathcal P}(\Omega)\times \Omega\to \Omega$ for which $f=f_G$. 
\end{proposition}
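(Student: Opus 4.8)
The plan is to exhibit such an $f$ explicitly as a pushforward $f(\mu) = (T_\mu)_\#\mu$ by a Borel map $T_\mu:\Omega\to\Omega$ that depends \emph{discontinuously} on $\mu$, arranged so that a vanishingly small atom is sent to a location that \emph{oscillates} as its mass tends to $0$. This still provides \emph{an} in-context representation of $f$ (namely the discontinuous one $(\mu,x)\mapsto T_\mu(x)$), so the content of the proposition is precisely that continuity of $G$ is incompatible with such behaviour: continuity of $G$ at a limiting measure would force the destination of the small atom to converge.

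\textbf{The construction.} I would fix $r\in(0,1)$ and $\kappa\in(0,\tfrac12)$, pick a Lipschitz $\chi:[0,\infty)\to[0,1]$ with $\chi\equiv1$ on $[0,\tfrac12]$ and $\chi\equiv0$ on $[1,\infty)$, and set
\[
M(\mu,x):=\int_\Omega \chi\!\Big(\tfrac{|y-x|}{r}\Big)\,d\mu(y),\qquad (\mu,x)\in\mathcal{P}(\Omega)\times\Omega ,
\]
a ``local mass'' functional that is Lipschitz in $x$ uniformly in $\mu$, continuous in $\mu$ (in $W_1$) uniformly in $x$, and satisfies $M(\mu,x)\ge \mu(B(x,r/2))$. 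I then take $g:[0,\infty)\to[0,3]$ continuous on $(0,\infty)$ with $g\equiv0$ on $[\kappa,\infty)$, $g(0):=0$, and $g(s):=\tfrac{\kappa-s}{\kappa}\big(2+\sin(1/s)\big)$ for $s\in(0,\kappa)$; thus $g$ is bounded, vanishes off $(0,\kappa)$, is strictly positive on $(0,\kappa)$, is continuous at $\kappa$, but $\limsup_{s\to0^+}g(s)=3\ne1=\liminf_{s\to0^+}g(s)$. Defining $T_\mu(x):=g(M(\mu,x))$ and $f(\mu):=(T_\mu)_\#\mu$, one has $f:\mathcal{P}(\Omega)\to\mathcal{P}(\Omega)$ (in fact into $\mathcal{P}([0,3])$), and $f$ is support-preserving since a pushforward sends atoms to atoms and automatically respects $x_j=x_i\Rightarrow T_\mu(x_j)=T_\mu(x_i)$.

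\textbf{Continuity of $f$ --- the crux.} The mechanism is that the ``wild'' mass is quantitatively small: from $M(\mu,x)\ge\mu(B(x,r/2))$ and a covering of $\Omega=[-3,3]$ by $O(1/r)$ intervals of length $r/2$ one gets $\mu(\{x:M(\mu,x)<t\})\le C_r\,t$ for every $t>0$. Given $\mu_n\to\mu$, I would couple them optimally by $\gamma_n$ and split $\gamma_n$: on pairs $(x,y)$ with $M(\mu_n,x)\ge t$, $M(\mu,y)\ge t$ and $|x-y|<\sqrt{W_1(\mu_n,\mu)}$, the quantity $|T_{\mu_n}(x)-T_\mu(y)|$ is controlled by the modulus of continuity of $g$ restricted to $[t,\infty)$ (where $g$ is uniformly continuous) together with the uniform convergence $M(\mu_n,\cdot)\to M(\mu,\cdot)$, hence tends to $0$; the complementary pairs carry total $\gamma_n$-mass $\le 2C_r t+\sqrt{W_1(\mu_n,\mu)}$ by the displayed estimate and Markov's inequality, and there $|T_{\mu_n}(x)-T_\mu(y)|\le3$. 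Since $W_1(f(\mu_n),f(\mu))\le\int|T_{\mu_n}(x)-T_\mu(y)|\,d\gamma_n$, letting $n\to\infty$ and then $t\to0$ gives $f(\mu_n)\to f(\mu)$. I expect this continuity verification to be the main obstacle --- in particular the bookkeeping for mass lying in the ``wild zone'' $\{M(\mu_n,\cdot)<t\}$ for $\mu_n$ but in the ``tame zone'' for $\mu$, and for the transition region near $M=\kappa$; the design of $g$ (wild only near $0$, but continuous and vanishing at $\kappa$) is tailored so that this step goes through.

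\textbf{Ruling out a continuous $G$.} Finally, take $\nu^\epsilon:=(\tfrac12-\tfrac\epsilon2)\delta_{-1}+\epsilon\,\delta_0+(\tfrac12-\tfrac\epsilon2)\delta_1\in\mathcal{P}(\Omega)$ for $\epsilon\in(0,1)$, so that $\nu^\epsilon\to\nu^0:=\tfrac12\delta_{-1}+\tfrac12\delta_1$ in $W_1$. Since $r<1$ and $\kappa<\tfrac12$, for small $\epsilon$ one computes $M(\nu^\epsilon,\pm1)=\tfrac12-\tfrac\epsilon2>\kappa$ and $M(\nu^\epsilon,0)=\epsilon<\kappa$, hence $T_{\nu^\epsilon}(\pm1)=0$, $T_{\nu^\epsilon}(0)=g(\epsilon)\in(0,3]$, and therefore $f(\nu^\epsilon)=(1-\epsilon)\delta_0+\epsilon\,\delta_{g(\epsilon)}$. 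If $f=f_G$ with $G$ continuous, then $f(\nu^\epsilon)=G(\nu^\epsilon)_\#\nu^\epsilon$; matching atoms by mass (for small $\epsilon$ the atom of mass $\epsilon$ is lighter than $\tfrac12-\tfrac\epsilon2$, so among subsets of $\{-1,0,1\}$ only $\{0\}$ has $\nu^\epsilon$-mass-sum equal to $\epsilon$) forces $G(\nu^\epsilon,0)=g(\epsilon)$. But continuity of $G$ together with $\nu^\epsilon\to\nu^0$ would force $G(\nu^\epsilon,0)\to G(\nu^0,0)$, contradicting the fact that $g(\epsilon)=\tfrac{\kappa-\epsilon}{\kappa}(2+\sin(1/\epsilon))$ oscillates in $[1,3]$ with no limit as $\epsilon\to0^+$. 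Hence no continuous $G$ represents $f$, which proves the proposition. (By Theorem~\ref{main thm}, such an $f$ must then also fail condition (B2).)
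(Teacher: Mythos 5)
Your proof is correct, and it takes a genuinely different route from the paper's. The paper's counterexample is built from a single one-parameter family of maps $R_a(x)=x+\tfrac1{10}\cos^2(\tfrac\pi2 x)\cos(ax)$ (identity outside $(-1,1)$) together with a global Lipschitz functional $\kappa(\mu)$ measuring, roughly, the mass in $(-2,2)$; the in-context map is then $G(\mu,x)=R_{1/\kappa(\mu)}(x)$. The ``wild'' set where the oscillation $\cos(x/\kappa(\mu))$ is nontrivial is the fixed interval $(-1,1)$, and continuity of $f$ follows because $\mu((-1,1))\le\kappa(\mu)$ vanishes exactly as the parameter $a=1/\kappa(\mu)$ blows up; the contradiction is produced with $\mu_\epsilon=(1-\epsilon)\delta_2+\epsilon\delta_{\sqrt\epsilon}$, forcing $G(\mu_\epsilon,\sqrt\epsilon)=R_{1/\epsilon}(\sqrt\epsilon)\approx\tfrac1{10}\cos(1/\sqrt\epsilon)$. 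Your construction instead uses a \emph{per-point} local mass $M(\mu,x)$, an oscillation profile $g$ applied to that scalar, and a covering argument to get the clean quantitative bound $\mu(\{M(\mu,\cdot)<t\})\le C_r t$. What you gain is a cleaner separation of ingredients: the ``small wild set'' estimate is explicit and does not depend on any coincidence between the cut-off profile and the oscillation, and the continuity proof via an optimal coupling and a Markov-type splitting is self-contained. What the paper's construction buys is that the map $x\mapsto R_{a}(x)$ is a $C^1$ perturbation of the identity for each fixed $a$, so each $T_\mu$ is continuous in $x$ (whereas your $T_\mu(x)=g(M(\mu,x))$ may be discontinuous in $x$ at the boundary of the wild region since $g$ is discontinuous at $0$ — this is harmless, as you correctly observe, because only Borel measurability of $T_\mu$ is needed for the pushforward, but it is a small aesthetic difference). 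Both of your final arguments (matching atoms by weight using three atoms with pairwise distinct masses, versus the paper's two-atom choice with masses $1-\epsilon$ and $\epsilon$) are essentially the same trick. Two small points worth flagging: when you argue ``matching atoms by mass'' you should note that $g(\epsilon)>0$ for $\epsilon\in(0,\kappa)$, so the two atoms of $f(\nu^\epsilon)$ are genuinely distinct (you do have $g\ge\tfrac{\kappa-\epsilon}{\kappa}>0$ there, so this is fine), and the covering-by-intervals argument needs intervals of length strictly less than $r/2$ (or closed balls) so that $x\in I_j$ implies $I_j\subset \overline B(x,r/2)$ — a harmless adjustment.
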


\begin{proof}

 For $0 \le a \le 1$, we define 
\beq
& &R_a:[-3,3]\to [-3,3],\\ \nonumber
& &R_a(x)=\begin{cases}\ \ \ x,& \hbox{for $x\le -1$ or $x\ge 1$,}\\
x+\frac 1{10}\cos^2(\frac 12 \pi x) \cos( ax),& \hbox{for $-1 < x< 1$.}
\end{cases}
\eeq
We note that the derivative of $R_a$ is given by
\beq
& &R'_a:[-3,3]\to \R,\\ \nonumber
& &R'_a(x)=\begin{cases}\ \ \ 0,& \hbox{for $x\le -1$ or $x\ge 1$,}\\
1
-
\frac \pi{10}\cos(\frac 12 \pi x)\sin(\frac 12 \pi x) \cos( ax)
-
\frac a{10}\cos^2(\frac 12 \pi x) \sin( ax),
%
& \hbox{for $-1 < x< 1$}
\end{cases}
\eeq
and that  $R_a:[-3,3]\to [-3,3]$ is a $C^1$ function that maps $R_a :\ [-3,3]\to [-3,3]$. Moreover, we point out that when $a=0$, $R_0(x)=x$.

Next, we consider the map
\beq\label{Tb map}
f(\mu)=(R_{a(\mu)})_\#\mu,
\eeq
where
\beq \label{Tb map2}
a(\mu)=
\begin{cases}\ \ \ \frac 1{\kappa(\mu)}& \hbox{if $\kappa(\mu)>0$,}\\
\ \ \ \ \ 0&  \hbox{if $\kappa(\mu)=0$}\\
\end{cases}
\eeq
and 
\beq \label{Tb map3}
\kappa(\mu)
=\int_{-2}^{-1} (2-|x|)d\mu(x)+\int_{-1}^{1} d\mu(x)+\int_{1}^{2} (2-x)d\mu(x).
\eeq
The function $\mu \to \kappa(\mu)$ is a continuous map ${\mathcal P}([-3,3])\to \R$ but $\mu \to a(\mu)$ is not continuous.

 By \cite{Vallender}, the $1$-Wasserstein distance satisfies,
 \beq
W_{1}(\mu _{1},\mu _{2})= \int _{ {[-3,3]} }\left|F_{1}(x)-F_{2}(x)\right|\,\mathrm{d} x,
\eeq
where $F_1(x)=\mu_1([-3,x])$ and $F_2(x)=\mu_2([-3,x])$ are the cumulative distribution functions of $\mu_1$ and $\mu_2$, respectively. Thus, as  $R_{a(\mu)}(x)$ is the identity map
for $x \in [-3,3] \setminus  [-\frac 32,\frac 32]$ and $R_{a(\kappa)}$ maps the interval
$ [-\frac 32,\frac 32]$ to itself, we find that
\beq\label{Wasserstein distance formula}
W_{1}(f(\mu _{1}),f(\mu _{2}))&\le &\hbox{diam}\bigg(\bigg[-\frac 32,\frac 32\bigg]\bigg) \bigg|\mu _{1}\bigg(\bigg[-\frac 32,\frac 32\bigg]\bigg)-\mu_{2}\bigg(\bigg[-\frac 32,\frac 32\bigg]\bigg)\bigg|+
W_{1}(\mu _{1},\mu _{2})
\nonumber\\
&\le &3\bigg(\mu _{1}\bigg(\bigg[-\frac 32,\frac 32\bigg]\bigg)+\mu_{2}\bigg(\bigg[-\frac 32,\frac 32\bigg]\bigg)\bigg)+
W_{1}(\mu _{1},\mu _{2}). 
\eeq

\begin{lemma}
The map, $f :\ {\mathcal P}([-3,3]) \to {\mathcal P}([-3,3])$, is continuous in the 1-Wasserstein topology and is a support-preserving map.
\end{lemma}

\begin{proof}
When $\nu= \sum_{i=1}^{n}\frac 1n \delta_{x_i}$, we have by the definition of $f$ (cf.~\eqref{Tb map}) that
\ba
f(\nu)=(R_{a_0})_\#\nu,
\ea 
where $a_0=a(\nu)$. As $R_{a_0}$ is a $C^1$-map, we see that
\beq
f(\nu)= \sum_{i=1}^{n}\frac 1n \delta_{y_i},\quad y_i=R_{a_0}(x_i).
\eeq
This shows that $f$ is a  support-preserving map.

 Let $\mu_k,\mu \in {\mathcal P}([-3,3])$
satisfy 
\beq\label{without f limit in n}
\lim_{k\to\infty} W_{1}(\mu_k,\mu)=0.
\eeq
We will next show that
\beq\label{limit in n}
\lim_{k\to\infty} W_{1}(f(\mu _k),f(\mu))=0.
\eeq

First, we consider the case when $\kappa(\mu)>0$. In this case, also $\kappa(\mu_k)>0$ when $n$ 
is large enough. Then, we can use the fact that $(x,a)\to R_a(x)$
is $C^1$-smooth in the domain $(x,a)\in [-3,3]\times (0,1]$, i.e., when
$a$ is strictly positive. This implies that  
 the limit \eqref{limit in n} is valid when
$\kappa(\mu)>0$.

Second, we consider the case when $\kappa(\mu)=0$. Then, 
$\mu([-\frac 32-\frac 1{10},\frac 32+\frac 1{10}])=0$ and $f(\mu)=\mu$.
For all $\e>0$
there is $n_\e>0$ such that for $n\ge n_\e$ it holds that
$W_{1}(\mu _n,\mu)<\e$ and
$\mu_k([-\frac 32,\frac 32])<\e$. 

%
We see that  $R_{a(\mu)}(x)$ is the identity map
for $x\in [-3,3]\setminus  [-\frac 32,\frac 32]$ and $R_{a(\kappa)}$ maps the interval
$ [-\frac 32,\frac 32]$ to itself.
Thus,  $n\ge n_\e$, we have by \eqref{Wasserstein distance formula},
\beq
W_{1}(f(\mu_k),f(\mu ))
&\le &3\bigg(\mu_k\bigg(\bigg[-\frac 32,\frac 32\bigg]\bigg)+\mu\bigg(\bigg[-\frac 32,\frac 32\bigg]\bigg)\bigg)+
W_{1}(\mu_k,\mu) 
\nonumber\\
&\le &3 \e+W_{1}(\mu_k,\mu)
\nonumber\\[0.25cm]
&\le &4 \e.
\eeq
These show that the limit \eqref{limit in n} is valid also when
$\kappa(\mu)>0$. This proves that the limit \eqref{limit in n} is valid.
Hence, $f$ is continuous in 1-Wassestein metric. This proves the claim.
\end{proof}


In the following, we use the 1-Wasserstein topology in the set $\mathcal P([-3,3])$.

\begin{lemma}
There are no continuous maps $G:\mathcal P([-3,3])\times [-3,3]\to [-3,3]$,
such that
\beq
f(\mu)=f_G(\mu).
\eeq
\end{lemma}

\begin{proof}
For $\e>0$, let
\ba
& &\mu_\e=(1-\epsilon)\delta_{x_0}+\epsilon \delta_{\sqrt \e},\\
& &\nu_\e=(1-\epsilon)\delta_{x_0}+\epsilon \delta_{R_{1/\e}(\sqrt \e)},
\ea
where $x_0=2$.
We see that  as $\epsilon\to 0$, we have
\beq\label{lim of mu eps}
& &\lim_{\epsilon\to 0} W_1(\mu_\e,\delta_{x_0})=0,\\
& &\lim_{\epsilon\to 0} W_1(\nu_\e,\delta_{x_0})=0.
\eeq 
We have $\kappa(\mu_\e)=\e$ so that $a(\e)=1/\e$ and thus
we see that
\beq
f(\mu_\e)=\nu_\e.
\eeq
Moreover,
\beq
R_{a(\mu_\e)}(\sqrt \e)&=&R_{1/\e}(\sqrt \e)
\\
&=&\sqrt \epsilon +
\frac 1{10}\cos^2\bigg(\frac 12 \pi \sqrt \e\bigg) \cos\bigg( \frac 1{\e}\sqrt \e\big)\\
&=&
\sqrt \epsilon +\frac 1{10}\cos^2\bigg(\frac 12 \pi \sqrt \e\bigg) \cos\bigg(\frac 1{\sqrt \e}\bigg).
\eeq

Let us assume that there is a continuous map $G:\mathcal P([-3,3])\times [-3,3]\to [-3,3]$, where in the set $\mathcal P([-3,3])$ we use the 1-Wasserstein topology such that 
\beq
f(\mu)=f_G(\mu)=(G(\mu))_\#\mu.
\eeq
We observe that 
\beq
f(\mu_\e)=\nu_\e
\eeq
implies that when $0<\e<\frac 12$ we have
$\mu_\e\in \mathcal M^+_{fin,dif}([-3,3])$ and 
\beq
& &G(\nu_\e,x)|_{x=2}=2,\\
& &G(\nu_\e,x)|_{x=\sqrt \epsilon}={R_{1/\e}(\sqrt \epsilon)}.
\eeq
Thus,
\beq\label{limsup formula}
& &\limsup_{\e\to 0+
}G(\mu_\e,x)\bigg|_{x=\sqrt \epsilon}=
\limsup_{\e\to 0+}\sqrt \epsilon +\frac 1{10}\cos^2\bigg(\frac 12 \pi \sqrt \e\bigg) \cos\bigg(\frac 1{\sqrt \e}\bigg)=+\frac 1{10},\\
\label{liminf formula}
& &\liminf_{\e\to 0+
}G(\mu_\e,x)\bigg|_{x=\sqrt \epsilon}=
\liminf_{\e\to 0+}\sqrt \epsilon +\frac 1{10}\cos^2\bigg(\frac 12 \pi \sqrt \e\bigg) \cos\bigg(\frac 1{\sqrt \e}\bigg)=-\frac 1{10}.
\eeq
Formulas \eqref{limsup formula}, \eqref{liminf formula}, and \eqref{lim of mu eps} are in contradiction with the assumption that 
the map $G:\mathcal P([-3,3])\times [-3,3]\to [-3,3]$ is continuous.
This proves the claim.
\end{proof}

The above lemmas yield {Proposition} \ref{prop-non existance}.
\end{proof}

{


To discuss the connection of the above counterexample with LLMs, we consider a sequence of tokens $(x_1,x_2,\dots,x_n)\in \Omega^n$, where $\Omega\subset \RR^d$, that are identified
with discrete measures $\frac 1n \sum_{i=1}^n\delta_{x_i}$ via the map $\iota$ given in formula \eqref{iota map}.
Below, as an interesting counterexample, we will construct a map $f:\mathcal M^+_{fin}([-3,3]) \to \mathcal M^+_{fin}([-3,3])$
for which the corresponding map $F=\iota^{-1}\circ f\circ \iota$ maps
a sequence of tokens $X=(x_1,x_2,\dots,x_n)\in [-3,3]^n\subset \R^n$ to a sequence
\ba
F(X)=(y_1(X,x_1),y_1(X,x_2),\dots,y_n(X,x_n))\in [-3,3]^n\subset \R^n.
\ea

Let us consider an example where $d=1$ and let $\Omega=[-3,3]$ be the space where we consider the tokes and $B_1=[-1,1]$ and $B_2=[-2,2]$ be balls (i.e. intervals) centered at zero.

This map has the following property: Let $n>1$ be very large and consider a sequence $X_n=(x_1,x_2,\dots,x_n)$
where
\ba
x_1,x_2,x_3\in B_1,\quad x_4,x_5,x_6,x_7,\dots,x_n\in\Omega\setminus B_2
\ea
that is, the first three tokens are in the smaller neighborhood of the point 0
and all other tokens are outside the larger neighborhood of the point 0. Denote the image of this sequence of tokens in the map $F$ by
\ba
F(X_n)=F(x_1,x_2,\dots,x_n)=(y_1(X_n,x_1),y_2(X,x_2),\dots,y_n(X,x_n)).
\ea
When $n$ is large, the measure $\mu_X(B_2)$, of the set $B_2$ with respect
to the measure $\mu_X=\frac 1n \sum_{i=1}^n\delta_{x_i}$, is small. More precisely, $\mu_X(B_2)=\frac 3n$.
Then, when $f$ is the map constructed below in formulas \eqref{Tb map}-\eqref{Tb map3} below, 
the function $$x_1\to (y_1(X_n,x_1),y_2(X_n,x_2),y_3(X_n,x_3))$$ converges to a discontinuous function as $n\to \infty$.
This means that when the prompt becomes sufficiently long,
then the map $F$ transforms some of the tokens in a possible unstable way.
As a possible playful example, two long, almost similar  prompts, coded with map which
assigns tokens in $\R^d$ for words, so that the names `Alice' and `Elise' are mapped to tokens that
are very close to each others, that is, $|\iota(Alice)-\iota(Elise)|$ is small.
We consider to very long prompts which are the same except their first words (in this example, we use
a long `Lorem ipsum' text that is commonly used in graphic design and publishing as a dummy or placeholder text).
The promts
\ba
& &X_n=(Alice,is,studying,the,text,Lorem,ipsum,dolor, sit, amet, \dots, nibh),\\
& &X'_n=(Elise,is,studying,the,text,Lorem,ipsum,dolor, sit, amet,\dots, nibh),
\ea
could possibly be mapped in the composition of $F$ and a permutation $S$ (that changes the 1st and the 3rd words)
\ba
& &(S\circ F)(X_n)=(The,reader,LOVES,the,text,Lorem,ipsum,dolor, sit, amet, \dots, nibh),\\
& &(S\circ F)(X'_n)=(The,reader,HATES,the,text,Lorem,ipsum,dolor, sit, amet, \dots, nibh).
\ea
}

\commented{\color{blue}
A TEXT THAT WAS AN EARLIER ALTERNATIVE (Sep. 10):

Before giving the construction of the counterexample, let us explain our aim in
terms of sequences of tokens $(x_1,x_2,\dots,x_n)\in (\RR^d)^n$ that are identified
with discrete measures via the map $\iota$ given in formula \eqref{iota map}.
Below, as an interesting counterexample, we will construct a map $f:\mathcal M^+_{fin}([-3,3]) \to \mathcal M^+_{fin}([-3,3])$
for which the corresponding map $F=\iota^{-1}\circ F\circ \iota$ maps
a sequence of tokens $X=(x_1,x_2,\dots,x_n)\in [-3,3]^n\subset \R^n$ to a sequence
\ba
F(X)=(y_1(X,x_1),y_1(X,x_2),\dots,y_n(X,x_n))\in [-3,3]^n\subset \R^n
\ea
This map has the following property: Let $n>1$ be very large and consider a sequence $X_n=(x_1,x_2,\dots,x_n)$
where
\ba
x_2=x_3=\dots=x_n=2,
\ea
that is, a single token is repeated many times. Denote the image of this sequence of tokens in the map $F$ by
\ba
F(X_n)=F(x_1,2,2,\dots,2)=(y_1(X_n,x_1),y_2(X,2),\dots,y_2(X,2)).
\ea
Then, as $n \to \infty$, the functions $x_1\to y_1(X_n,x_1)$ converge to a discontinuous function.
This means that when a single token is repeated sufficiently many times,
then the map $F$ transforms the other tokens in a possible unstable way.
As a possible playful example, two long, almost similar  prompts, coded with positional embedding
that slightly changes the tokens of words according to the position where they appear in the prompt,
\ba
& &X_n=(nice,and,cute,dog,says,woof,woof,\dots,woof),\\
& &X'_n=(cute,and,nice,dog,says,woof,woof,\dots,woof),
\ea
could possibly be mapped to 
\ba
& &F(X_n)=(Some,dogs,in,England,say,woof,woof,\dots,woof),\\
& &F(X'_n)=(No,dogs,in,England,say,woof,woof,\dots,woof)
\ea
}

\end{document}